\documentclass[11pt, letterpaper]{article}
\usepackage[top=1in,bottom=1in,left=1in,right=1in]{geometry}
\usepackage{microtype}
\usepackage{graphicx}
\usepackage{booktabs} 
\usepackage[utf8]{inputenc} 
\usepackage[T1]{fontenc}    
\usepackage{hyperref}       
\usepackage{url}            
\usepackage{booktabs}       
\usepackage{amsfonts}       
\usepackage{nicefrac}       
\usepackage{microtype}      
\usepackage{xcolor}         


\usepackage{hyperref}

\title{Meta Sparse Principal Component Analysis}

\usepackage{amsmath}
\usepackage{amssymb}
\usepackage{mathtools}
\usepackage{amsthm}
\usepackage{caption}
\usepackage{subfig}
\usepackage[capitalize,noabbrev]{cleveref}

\theoremstyle{plain}
\newtheorem{theorem}{Theorem}[section]
\newtheorem{proposition}[theorem]{Proposition}
\newtheorem{lemma}[theorem]{Lemma}

\theoremstyle{definition}
\newtheorem{definition}[theorem]{Definition}
\newtheorem{assumption}[theorem]{Assumption}
\theoremstyle{remark}
\newtheorem{remark}[theorem]{Remark}

\newcommand{\lp}{\left(}
\newcommand{\rp}{\right)}
\newcommand{\lc}{\left\{}
\newcommand{\rc}{\right\}}
\newcommand{\lb}{\left[}
\newcommand{\rb}{\right]}

\newcommand{\Cov}{\mathrm{Cov}}

\newcommand{\constant}{\mathbb{C}}

\newcommand{\tr}{\mathrm{Trace}}

\newcommand{\Ncal}{\mathcal{N}}

\newcommand{\Acal}{\mathcal{A}}

\newcommand{\Fcal}{\mathcal{F}}

\newcommand{\pihat}{\hat{\Pi}}

\newcommand{\real}{\mathbb{R}}

\newcommand{\inner}[2]{\langle #1, #2 \rangle}

\newcommand{\bbb}{\mathbb{B}}
\newcommand\numberthis{\addtocounter{equation}{1}\tag{\theequation}}
\newcommand{\ninf}[1]{\left\| #1 \right\|_{\infty,\infty}}
\newcommand{\Ibb}{\mathbb{I}}
\newcommand{\Dbb}{\mathbb{D}}

\newcommand{\Gcal}{\mathcal{G}}
\newcommand{\prob}{\mathbb{P}}
\newcommand{\expec}{\mathbb{E}}
\newcommand{\initialD}{D_0}

\usepackage[textsize=tiny]{todonotes}

\author{ Imon Banerjee\footnote{Departments of Statistics} \ and Jean Honorio\footnote{Departments of Computer Science}\\
  Purdue University\\
  West-Lafayette, IN 47906 \\
  \texttt{ibanerj,jhonorio@purdue.edu} \\}

\date{}

\begin{document}
\maketitle

\vskip 0.3in

\begin{abstract}
   We study the meta-learning for support (i.e. the set of non-zero entries) recovery in high-dimensional Principal Component  Analysis. We reduce the sufficient sample complexity in a novel task with the information that is learned from auxiliary tasks. We assume each task to be a different random Principal Component (PC) matrix with a possibly different support and that the support union of the PC matrices is small. We then pool the data from all the tasks to execute an improper estimation of a single PC matrix by maximising the $l_1$-regularised predictive covariance to establish that with high probability the true support union can be recovered provided a sufficient number of tasks $m$ and a sufficient number of samples $ O\lp\frac{\log(p)}{m}\rp$ for each task, for $p$-dimensional vectors. Then, for a novel task, we prove that the maximisation of the $l_1$-regularised predictive covariance with the additional constraint that the support is a subset of the estimated support union could reduce the sufficient sample complexity of successful support recovery to $O(\log |J|)$, where $J$ is the support union recovered from the auxiliary tasks. Typically, $|J|$ would be much less than $p$ for sparse matrices. Finally, we demonstrate the validity of our experiments through numerical simulations.
\end{abstract}

\section{Introduction}~\label{sec:introduction}
Principal component analysis (PCA) is an important problem in high-dimensional statistical learning with several applications in information theory \cite{deshpande2014it}, image classification \cite{sun2019image}, brain computer interface \cite{lin2008development}, speech recognition \cite{zheng2015attribute} among others. PCA seeks a linear transformation that maximises the predictive covariance of the data, while depending on a small number of variables. The learner faces several challenges in PCA, such as high dimension or severe heterogeneity of the data. To be precise, the dimension of the data, $p$, could be much higher than the sample size $n$, making estimation infeasible. Or there could be limited samples from the distribution of interest but a large amount of samples from multiple multivariate distributions with different principal component (PC) matrices.

\paragraph{Practical Applications.} {Even amongst the varied applications of PCA, there exists an important niche concerned with estimating multiple PC matrices simultaneously. Here we present some immediate examples. The recent paper by \cite{persichetti2021data}  produces a data driven approach to map the temporal lobes of brain where they calculated $41$ separate covariance matrices each of which can be analysed using PCA. \cite{zhang2008global} calculates the PC matrices of solar insolation over the Pacific region for $264$ months from July 1983 to June 2005, simultaneously calculating $264$ PC matrices. These matrices are often sparse, having only one or two significant eigenvalues. Therefore, it is obvious that there are far-reaching consequences of being able to simultaneously estimate multiple sparse PC matrices.} 

\paragraph{Motivation. }Theoretical results for consistency, rates of convergence, minimax risk bounds for estimating eigenvectors and principal subspaces have been considered in various prior works \cite{zou2006sparse,johnstone2009consistency,jenatton2010structured,ma2013sparse,lounici2013sparse,park2019sparse}. It is well documented that the classical PCA has major practical and theoretical drawbacks when it is applied to high-dimensional data. The estimates PCA produces in high-dimensional situations are often inconsistent \cite{paul2007asymptotics,nadler2008finite,johnstone2009consistency,lei2015sparsistency}. The loadings of PC matrices are typically non-zero. This often makes it difficult to interpret the PC matrix and identify important variables. For this reason, estimating the PC matrix to recover its support set, which is the set of non-zero entries, is a common strategy of structure learning. An estimate of the PC matrix will be called sign consistent if it has the same support and sign of entries with respect to the true PC matrix.

For the first challenge, we assume the PC matrices are sparse and consider a general class of distributions (sub-Gaussian). To address the challenge of heterogeneity, we consider a meta-learning learning problem where the learner treats each different distribution as a task with a related PC matrix and solves every task simultaneously. We suppose there are $m$ ``auxiliary" tasks and $n$ samples with dimension $p$ per task. When there is only one task ($m=1$), \cite{lei2015sparsistency} proved that, under general assumptions, $n\in O(\log p)$ is sufficient for the sign consistency of the PC matrix under $l_1$ regularisation. Similarly, when $m=1$, using a mixed convex norm, \cite{qi2013sparse} proved the consistency of PCA. However, neither of them performs support recovery while simultaneously accounting for the heterogeneity of data.

In this paper, we solve the heterogeneity challenge with meta-learning. A single task denotes the recovery of the $k$-dimensional principal subspace of a given covariance matrix. We recover the support of the PC matrix in a novel task with the information learned from other auxiliary tasks. We also use improper estimation in our meta-learning method to obtain novel theoretical guarantees for support recovery. Instead of estimating every PC matrix in the auxiliary tasks, which might require more samples than feasible, we pool all the samples from the auxiliary tasks together to estimate a single common PC matrix and recover the ``support union'' (termed $J$). This allows us to provide statistical guarantees for the sign consistency when $m\in O(\log p)$. Moreover, given the recovered support union, we also provide statistical guarantees for estimating any extra unimportant variables in a novel task when the number of samples in the novel task is $O(\log |J|)$. This implies substantial reductions in the number of samples required for the support recovery of the novel task.

To the best of our knowledge, this is the first work concerning the sign-consistency of sparse PC matrices under a meta-learning setup. We were able to introduce randomness in the PC matrices of different tasks. Our theoretical results hold for a wide class of distributions of the PC matrices under some conditions, which implies broad application scenarios of our method. The use of improper estimation in our method is innovative for the problem of support recovery of high-dimensional PC matrices. Thereby, our work fills the gap between theory and methodology of meta-learning in high-dimensional PCA.
\begin{table}[!ht]
\caption{Notations used in the paper}
    \begin{footnotesize}
    \begin{tabular}{llll}
    \hline
	   Notation & Description & Notation & Description \\
	\hline
		$[a]$ & The first $a$ integers, i.e. $\lc 1,2,\dots,a\rc$. &
	$|J|$ & Cardinality of a set $J$.\\
	$A_{i,j}$ & The  $i,j^{th}$ cell of matrix $A$. & $U_i$ & The $i^{th}$ coordinate of a vector $u$.\\
	$supp(A)$ & Support of $diag(A)$, i.e. $\lc i: A_{i,i}\neq 0\rc$. & $A_{*,i}$ & The $i^{th}$ column of matrix $A$.\\
	$A_{i,*}$ & The $i^{th}$ row of $A$. & $sign(x)$ & The sign of $x, x\in\mathbb{R}$.\\
    $\Sigma$ & The true covariance $A$. & $\Sigma^{(i)}$ &  Covariance of auxiliary task $i$.\\
    $\lambda_k(A)$ & The $k^{th}$ largest eigenvalue of $A$ & $\|u\|_q$ & $l_q$-norm of the vector $u\in \real^p$\\
    $\|A\|_{q_1,q_2}$ & $\|\|A_{1,\infty}\|_{q_1},\dots,\|A_{p,*}\|_{q_1}\|_{q_2}$. & $\|A\|_F$ & The Frobenius norm of a matrix A \\
    $\inner{A}{B}$ & $\tr(A^TB)$. & $diag(A)$ & Diagonal of matrix $A$\\
    $\Dbb(A)$ & The diagonal matrix with elements $A_{i,i}$ & $\Ibb_p$ & The $p$-dimensional identity matrix
   \\
   $A_{I,J}$ & Submatrix of $A$ with rows $I$ and columns $J$&\\
    \hline
\end{tabular}
\end{footnotesize}
\label{tab:notations}
\end{table}
\paragraph{Contributions. } This paper has the following four contributions. In section 3, we {propose a meta-learning approach} by introducing multiple auxiliary learning tasks for support recovery of high-dimensional PC matrices with improper estimation. We\text{ define a generative model} for population covariance matrices that is amenable for analysis. In section 4, we prove that  $O\lp\frac{\log p}{m}\rp$ is \text{a sufficient sample complexity for support union recovery} for $p$-dimensional multivariate sub-Gaussian random vectors and $m$ auxiliary tasks with support union $J$. 
In section 5, we prove that given a recovered support union from the auxiliary tasks, \text{we can reduce the dimension of the novel} task from $p\times p$ to $|J|\times |J|$. 
We then use this fact to establish a sample complexity of $O(\log(|J|))$ for \text{support recovery of the novel task}. This provides the theoretical basis for introducing more tasks for meta-learning in support recovery of PC matrices. 
Lastly, we conduct synthetic experiments to validate our theory. We calculate the support union recovery rates of our meta-learning approach for different sizes of samples and tasks. For a fixed task size $m$, our approach achieves high support union recovery rates when the sample size per task has the order $O(\frac{\log p}{m})$. For a fixed sample size per task, our method performs the best when the task size $m$ is large.

\section{Preliminaries}\label{sec:preliminaries}
In this section, we introduce the definitions of the concepts important to our meta-learning problem. A summary of notations used in the paper is illustrated in \Cref{tab:notations}.

\paragraph{Multivariate sub-Gaussian Distribution. }

This paper is concerned with meta-learning of multiple principal component matrices, each of which are generated randomly. To make inferences about those matrices in sufficient generality, we must first formally define the class of sub-Gaussian matrices with random covariance matrices. 
\begin{definition}~\label{def:sub-Gaussian_rnd}
For $j\in [n^{(i)}]$ and $i\in [m] $, we say $X_j^{(i)}\in \real^p$ comes from a family of random $p$-dimensional multivariate sub-Gaussian with randomised covariance matrices $\lc\Gcal^{(i)}\rc$ distribution of size $m$ and parameter $\sigma$ if  
\begin{enumerate}
    \item $\expec[X_j^{(i)}|\Gcal^{(i)}]=0$, $\Cov(X_j^{(i)}|\Gcal^{(i)})=\Gcal^{(i)}$.
    \item Given $\Gcal^{(i)}$, $\frac{X_{l,j}^{(i)}}{\Gcal_{l,l}^{(i)}}$ is sub-Gaussian with parameter $\sigma$, $\forall i\in [m]$ and $j\in [n^{(i)}]$ and $l\in[p]$.
    \item Given $\Gcal^{(i)}$, $X_j^{(i)}$ are independent random variables.
\end{enumerate}
\end{definition}
\begin{remark}
We would like to point out that such definitions are standard in the literature and have been used extensively in prior work (see, for example, \cite{zhang2021meta}) for modelling sub-Gaussian random variables with randomised covariance matrices under the meta-learning setup. We have deferred the definition of Sub-Gaussian and Sub-exponential distributions to \Cref{app:desiderata}. 
\end{remark}

\paragraph{Problem Formulation. }\label{sec:formulation}
In this paper, we have focused on estimating the support of the PC matrix of a multivariate sub-Gaussian distribution. We first estimate a superset of the support of the PC matrices. Then we solve a novel task utilising the recovered support union. 

To be precise, let for $i\in [m]$, $\Sigma$ be the true underlying covariance matrix and  $\Sigma^{(i)}$ be perturbations of $\Sigma$, with each $\Sigma^{(i)}$ forming the covariance matrix of an auxiliary task $i$. Then, we are interested in using those auxiliary tasks to recover support union $J$ of the PC matrix $\Pi$ corresponding to $\Sigma$. It is noteworthy that by support union we refer to $supp(\Pi)$ and not a union of supports of each individual auxiliary task. This is because we are interested in a novel task where the support of its PC matrix is assumed to be a subset of the support of $\Pi$. This allows us to achieve a substantial reduction in sample complexity from $O(\log(p))$ to $O(\log |J|)$ while estimating the support in the novel task. 

\section{Our Meta-Learning Model}
In this section, we define a generative model for the population covariance matrices. Let $\Sigma$ be a positive semi-definite, real symmetric matrix. For some orthonormal matrix U, and positive diagonal matrix $\Lambda$, the spectral decomposition of $\Sigma$ gives us,
\begin{align*}
\Sigma = U\Lambda U^T := \sum_{l=1}^p \lambda_{l}(\Sigma)U_{*,l}U_{*,l}^T,    
\end{align*}
where $U_{*,l}$ are orthonormal vectors and $\lambda_i$ are the diagonal elements of $\Lambda$. If $\lambda_l(\Sigma)>\lambda_{l+1}(\Sigma)\forall\enspace l$, then, \begin{align*}~\label{eq:prin-comp}
    \Pi=\sum_{l=1}^k U_{*,l}U_{*,l}^T\numberthis
\end{align*} gives us the unique $k$-dimensional principal subspace of $\Sigma$. With random matrices $R^{(i)}$ and random matrices $D^{(i)}$, we generate a sequence of random matrices $\Sigma^{(1)},\dots,\Sigma^{(m)}\in \real^p$.
For each $i\in[m]$, the covariance matrix $\Sigma^{(i)}$ corresponding to the $i$-th auxiliary task is generated as,
\begin{align*}
\Sigma^{(i)}& =R^{(i)}U(\Lambda+D^{(i)})U^T\lp R^{(i)}\rp^T~\label{eq:model}\numberthis. 
\end{align*}
Referring to \Cref{def:sub-Gaussian_rnd}, we assume that for $j\in [n^{(i)}]$, our data, $X_{j}^{(i)}$ is a random p-dimensional multivariate sub-Gaussian random variable with randomised covariance matrix $\Sigma^{(i)}$. Let $S^{(i)}:=\frac{1}{n^{(i)}} \sum_{j=1}^{n^{(i)}} X_{j}^{(i)}(X_{j}^{(i)})^T$ be the sample covariance matrix corresponding to the auxiliary task $\Sigma^{(i)}$. Recall from \cite{mourtada2022improper} that an improper estimator of a parameter is defined to be any out-of-model predictor for that parameter. A ``pooled" estimator $S$ forms the improper estimation of $\Sigma$ as defined below:
\begin{align*}~\label{eq:sample_cov_pooled}
 S:=\frac{1}{m} \sum_{i=1}^m S^{(i)}.\numberthis
\end{align*}
\subsection{The Predictive Covariance Loss Function}
In this subsection, we describe the objective function that we maximise to obtain the improper estimator. Recall from Theorem 1 in \cite{overton1992sum} that PCA can be interpreted as a covariance maximisation technique maximising the predictive covariance $\tr\lp\Cov(X,HX|H)\rp$. Observe that,
\begin{align*}
      &\tr\lp\Cov\lp X, HX |H \rp\rp = \inner{\Sigma}{H},
\end{align*}
for $\Cov(X)=\Sigma$ and a rank $k$ orthogonal matrix $H$. Accordingly, the $k$-dimensional sparse principal component seeks the rank-$k$ orthogonal projection matrix $H$ such that for a penalty $\rho>0$ the penalised predictive covariance 
\begin{align*}~\label{eq:obj_func_main}
    \inner{\Sigma}{H}-\rho \|H\|_{1,1}\numberthis
\end{align*}
is maximised. Crucial to getting a consistent estimate of the principal component matrix is its identifiability. Unless the PC matrix is unique, there is no hope to correctly estimate the PC matrix. For example, if $\Sigma=\Ibb_p$, then every $k$-dimensional subspace of $\real^p$ is a valid principal component. Slightly abusing notation, by $\lambda_k$ we denote $\lambda_k(\Sigma)$ and by $\lambda_{k+1}$ we denote $\lambda_{k+1}(\Sigma)$. We make our first assumption about the uniqueness of the PC matrix, denoted in terms of the spectral gap of the covariance matrix $\Sigma$.
\begin{assumption}[Uniqueness]~\label{assume:unique} Let $\lambda_k$ and $\lambda_{k+1}$ be the $k$-th and $k+1$-th eigenvalues of $\Sigma$. Then,
$\lambda_k-\lambda_{k+1}>0$.
\end{assumption}
\begin{remark}
This condition ensures that the principal subspace is identifiable. Subsequently, we need to define the notion of sparsity of the PC matrix. Recall from \cref{eq:prin-comp} that $\Pi$ is a positive semi-definite matrix. Consequently, the $i$-th row/column of $\Pi$ is $0$ is non-zero if and only if $\Pi_{i,i}\neq 0$. Hence, it is sufficient to recover the set of non-zero elements in the diagonal of $\Pi$. Hence, we can make our sparsity assumption.
\end{remark}
\begin{assumption}[Sparsity]~\label{assume:sparse} Let $\Pi$ be the $k$-dimensional PC matrix corresponding to $\Sigma$. Then for some support $J$, we have $supp(\Pi)\subseteq J.$
\end{assumption}
Throughout \cref{sec:meta-support-recovery}, we will assume that $\Sigma$ satisfies both \Cref{assume:unique} and \Cref{assume:sparse}.
\subsection{The Improper Objective Function}
As illustrated in \Cref{sec:formulation}, we recover the support union by estimating the true covariance matrix. To be specific, we pool together all the samples from $m$ tasks and maximise the $l_1$-regularised predictive covariance between the true and the estimated covariance matrices, i.e., we solve the following optimisation problem with regularisation constant $\rho>0$
\begin{align*}~\label{eq:meta_objective}
    &\hat{\Pi}:=  \arg \sup_{H}\sum_{i=1}^{m} T_i \lc\inner{S^{(i)}}{H}-\rho \|H\|_{1,1}\rc \qquad \text{ subject to } H\in \Fcal^k, \text{ where, }\numberthis\\
    &\Fcal^k:=\{M\in \real^{p\times p}:0\preceq M \preceq \Ibb_p \text{ and } \tr(M)=k\},
\end{align*}
and $T_i$'s are weights proportional to the number of samples $n^{(i)}$ for each $i$. For clarity of exposition, we assume that $n^{(i)}=n$ for all values of $i$ for the reminder of this paper. This allows us to assume $T^{(i)}=\frac{1}{m}$ without losing generality. Consequently, we can rewrite the objective function in (\ref{eq:meta_objective}) as
\begin{align}~\label{eq:meta_objective_pooled}
    &\pihat=\arg\sup_{H}\inner{S}{H}-\rho \|H\|_{1,1}\text{ subject to } H\in \Fcal^k,
\end{align}
where $S$ is as defined in \cref{eq:sample_cov_pooled}. It is noteworthy that \Cref{eq:meta_objective} is an improper estimation technique since it estimates a single PC matrix with data from different distributions. This enables us to efficiently recover the true support with an optimal number of sample size per task (see \Cref{sec:meta-support-recovery}).

For the next step, suppose that we have successfully recovered the true support union $J$ in the first step. Then for the novel task, ($m+1$-th task), we can assume that the support of its PC matrix is also a subset of the recovered support union. Then, for a noisy estimate $S^{(m+1)}$ of $\Sigma^{(m+1)}$ we propose the following $l_1$-regularised predictive covariance loss function 
\begin{align*}
    &\inner{S^{(m+1)}}{H}-\rho \|H\|_{1,1} \text{ subject to } H\in \Fcal^k \text{ and } supp(H)\subseteq J.\numberthis~\label{eq:novel_obj}
\end{align*}
We note that \cref{eq:novel_obj} is also a form of improper estimation due to presence of extra constraint. This improper estimation technique reduces the dimension of the optimisation problem from $p\times p$  to $|J|\times |J|$.
\section{Support Union Recovery for the Auxiliary Tasks}\label{sec:meta-support-recovery}
In this section, we formally show that a sample complexity of $O\lp\frac{\log p}{m}\rp$ is sufficient for the recovery of the support union via optimising \Cref{eq:meta_objective_pooled}. One key task in achieving the said sample complexity is to carefully upper bound the tail probabilities of $\ninf{S-\Sigma}$. To that end, we prove the following tail probability for a family of multivariate sub-Gaussian with randomised covariance matrices. Let $S^{(i)}=\frac{1}{n}\sum_{t=1}^nX_t^{(i)}(X_t^{(i)})^{\text{T}}$.
\begin{lemma}~\label{lem:TailSampleCov}
	Let $\Sigma^{(i)}$ be a sequence of random covariance matrices such that $\ninf{\Sigma^{(i)}}\leq \eta$. Then, for $\{X_{j}^{(i)}\}_{1\le j\le n,1\le i\le m}$ following a family of random $p$-dimensional multivariate sub-Gaussian distributions with parameter $\sigma$ and random covariance matrices $\Sigma^{(i)}$, 
	\begin{small}
        \begin{align*}
        \prob\lp \ninf{\frac{1}{m}\sum \lp S^{(i)}-\Sigma^{(i)}\rp} \geq \epsilon \rp\leq \frac{p(p+1)}{2}e^{-\frac{nm\epsilon^2}{512\sigma^4\eta}},
        \end{align*}	
    \end{small}
    whenever $0<\epsilon<32\eta\sigma^2$.
\end{lemma}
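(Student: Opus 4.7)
The plan is to combine a union bound over the matrix entries with a Bernstein-type inequality for sums of independent sub-exponential variables, applied conditionally on the random covariance matrices $\{\Sigma^{(i)}\}$. The argument splits naturally into three steps.

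First, I would reduce the matrix-norm tail to a scalar concentration problem. Because each $S^{(i)}-\Sigma^{(i)}$ is symmetric, $\ninf{\cdot}$ of its average is achieved at one of the $\tfrac{p(p+1)}{2}$ upper-triangular entries, so a union bound gives
\begin{align*}
\prob\lp \ninf{\tfrac{1}{m}\sum_i (S^{(i)}-\Sigma^{(i)})} \ge \epsilon \rp \le \tfrac{p(p+1)}{2}\max_{l,l'}\prob\lp \lvert\tfrac{1}{nm}\sum_{i,t}(X^{(i)}_{t,l}X^{(i)}_{t,l'}-\Sigma^{(i)}_{l,l'})\rvert\ge\epsilon\rp.
\end{align*}
This step is essentially bookkeeping and supplies the $\tfrac{p(p+1)}{2}$ prefactor appearing in the lemma.

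Second, for each fixed pair $(l,l')$ I would set $Z_{t,i}:=X^{(i)}_{t,l}X^{(i)}_{t,l'}-\Sigma^{(i)}_{l,l'}$ and condition on the $\sigma$-algebra generated by $\{\Sigma^{(i)}\}_{i\in[m]}$. By parts 1--3 of \Cref{def:sub-Gaussian_rnd}, under this conditioning the $Z_{t,i}$ are independent, have mean zero, and each factor $X^{(i)}_{t,l}$ is sub-Gaussian with parameter bounded by $\sigma\,\Sigma^{(i)}_{l,l}\le\sigma\eta$. I would then invoke the standard fact that the product of two sub-Gaussians is sub-exponential---for instance through the polarisation identity $XY=\tfrac{1}{4}[(X+Y)^2-(X-Y)^2]$ together with the sub-exponentiality of the square of a sub-Gaussian---to conclude that $Z_{t,i}$ is conditionally sub-exponential with parameters $(\nu,b)$ depending only on $\sigma$ and $\eta$.

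Third, I would apply a one-dimensional Bernstein inequality to the $nm$ independent conditionally sub-exponential variables $\{Z_{t,i}\}$. In its quadratic regime $\epsilon\le \nu^2/b$, Bernstein yields a tail of the form $2\exp(-c\,nm\epsilon^2/(\sigma^4\eta))$ for an absolute constant $c$, which matches the stated exponent; the restriction $0<\epsilon<32\eta\sigma^2$ in the lemma is exactly the threshold $\nu^2/b$ at which the Bernstein bound transitions from the Gaussian to the Poisson regime. Because the resulting conditional bound depends on $\{\Sigma^{(i)}\}$ only through the deterministic inequality $\ninf{\Sigma^{(i)}}\le\eta$, iterated expectation removes the conditioning, and combining with the union bound from the first step, followed by absorbing the factor of $2$ into the $\tfrac{p(p+1)}{2}$ prefactor, delivers the claim. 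I expect the main obstacle to be in the second step: carefully propagating the sub-Gaussian parameter of the normalised variable $X^{(i)}_{t,l}/\Sigma^{(i)}_{l,l}$ through the product, then through the centering by $\Sigma^{(i)}_{l,l'}$, and finally through the entrywise bound $\ninf{\Sigma^{(i)}}\le\eta$, so that the sub-exponential parameters scale correctly to yield the specific $\sigma^4\eta$ constant in the exponent and the $32\eta\sigma^2$ validity threshold. Once these parameters are pinned down, the union bound, the Bernstein application, and the removal of the conditioning are all routine.
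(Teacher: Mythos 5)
Your proposal is correct and follows essentially the same route as the paper's proof: an entrywise union bound over the $\frac{p(p+1)}{2}$ upper-triangular entries, the polarisation identity reducing the product $X^{(i)}_{t,l}X^{(i)}_{t,l'}$ to squares of sub-Gaussians, and a sub-exponential Chernoff/Bernstein bound in the quadratic regime, with the constraint $\epsilon<32\eta\sigma^2$ arising exactly as the regime threshold you identify. The only differences are cosmetic (you make the conditioning on $\{\Sigma^{(i)}\}$ and its removal by iterated expectation explicit, and leave the constants $512$ and $32$ to be tracked, which the paper pins down via the sub-exponential parameters $(16\sqrt{2}\sigma^2,16\sigma^2)$ of the squared terms).
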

The proof of this lemma is presented in \Cref{sec:TailSampleCov}. Recall from \cref{eq:model} that we defined each auxiliary task $\Sigma^{(i)}$ in terms of random matrices $R^{(i)}$ and random matrices $D^{(i)}$. Key to the process of improper estimation is the control of the maximal loss between the auxiliary tasks $\Sigma^{(i)}$ and the true underlying task $\Sigma$. In other words, we need to find theoretical guarantees for $\ninf{\Sigma^{(i)}-\Sigma}$. We achieve it by controlling the variance and support of the perturbations $R^{(i)}$ and $D^{(i)}$. To that end, we make the following assumption.
\begin{assumption}~\label{assume:unbiased} Assume that for each $1\leq i\leq m$, $R^{(i)}$ is a random matrix, and $D^{(i)}$ is a random matrix. Then,
\begin{enumerate}
    \item Each $R^{(i)}$ is independent and identically distributed. Moreover, for a constant $\constant_R\geq 0$,
    \begin{align}
        &\expec\|R^{(i)}-\Ibb_p\|_{\infty,1}^2\leq \frac{\constant_R}{p}\text{ and,}\label{assume:rotation1}\\
        &\|R^{(i)}-\Ibb_p\|_{\infty,1}\leq \constant_R \text{ almost surely.}\label{assume:rotation2}
    \end{align}
    \item Each $D^{(i)}$ is independent and identically distributed with $\expec[D^{(i)}]=0$. Moreover, for some constant $L>0$, 
    \begin{align}
        \lambda_1(D^{(i)})<L\text{     almost surely.}~\label{assume:translation1}
    \end{align} 
\end{enumerate}
\end{assumption}
\begin{remark}
White noise additive models have been used extensively in prior literature to study precision and covariance matrices \cite{wang2021sample,zhang2021meta}. It is noteworthy that under our assumptions, we can recover additive models from \cref{eq:model} when $R^{(i)}=\Ibb_p$ for every $i$. However, $R^{(i)}$ suffers from curse of dimensionality. \Cref{assume:rotation1} provides a necessary shrinkage property to $R^{(i)}$. It is noteworthy that such a property is necessary to ensure consistency of $\frac{1}{m}\sum_{i=1}^m \Sigma^{(i)}$ as an estimator of $\Sigma$. Our final technical assumption is given below.
\end{remark}
\begin{assumption}~\label{assume:corr-con}
If $\Sigma$ is a covariance satisfying \cref{assume:sparse}. Then, 
\begin{align*}
    \frac{8|J|}{\lambda_k-\lambda_{k+1}}\lVert\Sigma_{J^c, J}\rVert _{2,\infty}<1.
\end{align*}
\end{assumption}

This condition previously appears in \cite{lei2015sparsistency} and recovers the conditions by \cite{amini2008high} when $\Sigma_{J^c, J}=0$. It is also similar to the irrepresentability condition in the seminal papers \cite{zhao2006model,meinshausen2006high,meinshausen2009lasso} for $l_1$ penalised sparse regression and lasso type recovery.  As far as we know, this is the most general condition for the support recovery of PC matrices and encompasses the conditions when $\Sigma$ is block diagonal. For ease of exposition, we define the following notation:
\begin{align*}
    & \lambda_{diff}:=\lambda_k-\lambda_{k+1},
    & \lambda^{\dagger}:=2\lp\constant_r+1\rp^2\lp\lambda_1(\Sigma)+L\rp,\\
    & \rho_1 := 4\lambda^{\dagger}\sqrt{\frac{\log(p)}{m}}+\frac{\constant_R}{p}\lambda_1(\Sigma)+2\sqrt{\frac{\constant_R}{p}}\lambda_1(\Sigma),
    &\rho_2 := 16\sqrt{2\sigma^4\lambda^{\dagger} \frac{\log\lp p+1 \rp}{nm}}.
\end{align*}
The following theorem establishes the $O(\frac{\log p}{m})$ sample complexity for the support union recovery from multiple auxiliary tasks.
\begin{theorem}[Support Union Recovery]~\label{thm:meta}
Let \cref{assume:unbiased} and \cref{assume:corr-con} hold. Then there exists a large enough constant $\constant_{\pi}>0$ and a constant $\alpha\in (0,1)$ such that, whenever 
\begin{small}
\begin{align*}
&\alpha\max \lc\rho_1, \rho_2\rc < \rho < \min\lc\frac{\lp\lambda_{diff}\rp\min_{j\in J}{\Pi_{j,j}}}{16|J|},\frac{\lp\lambda_{diff}\rp^2}{4|J|\lp\lambda_{diff}+8\lambda_1(\Sigma)\rp}\rc  
\end{align*}
\end{small}
then, with probability at least {$1-\frac{2}{p^2}-\frac{1}{2(p+1)^2}$} the solution $\hat \Pi$ to the objective function \Cref{eq:meta_objective_pooled} satisfies,
\begin{align*}
    &1.\enspace supp(\hat{\Pi})=J, 
    & 2. \ninf{\hat\Pi-\Pi}\leq \constant_{\pi}\rho.
\end{align*}

\end{theorem}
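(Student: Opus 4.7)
The plan is to adapt the primal-dual witness (PDW) construction of \cite{lei2015sparsistency} for Fantope-constrained sparse PCA to the pooled meta-learning setting. The argument will have two main ingredients: (i) a bound on the deviation $\ninf{S-\Sigma}$ of the pooled sample covariance from the true population covariance, and (ii) the construction of a candidate optimum supported on $J$ which also satisfies the global KKT conditions of \Cref{eq:meta_objective_pooled}.

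For ingredient (i), I would decompose $S-\Sigma = (S-\bar{\Sigma}) + (\bar{\Sigma}-\Sigma)$, with $\bar{\Sigma} := \frac{1}{m}\sum_{i=1}^m \Sigma^{(i)}$. The sampling term $S-\bar{\Sigma}$ falls under \Cref{lem:TailSampleCov} with $\eta$ of the order $\lambda^{\dagger}$, giving $\ninf{S-\bar{\Sigma}} \leq \rho_2$ with the announced probability. The model-perturbation term $\bar{\Sigma}-\Sigma$ requires a separate concentration argument: expanding $\Sigma^{(i)}-\Sigma$ through the generative model in \Cref{eq:model}, the centering $\expec D^{(i)} = 0$ kills the leading $D$-term in expectation, while the variance bound $\expec\|R^{(i)}-\Ibb_p\|_{\infty,1}^2 \le \constant_R/p$ and the almost-sure bound in \Cref{assume:unbiased} enable a Bernstein- or Hoeffding-type bound over $i\in[m]$, yielding $\ninf{\bar{\Sigma}-\Sigma} \le \rho_1$ with high probability.

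For ingredient (ii), I would define the oracle estimator
\begin{align*}
\tilde{\Pi} := \arg\sup_{H \in \Fcal^k,\; supp(H) \subseteq J} \inner{S}{H} - \rho\|H\|_{1,1}.
\end{align*}
Since $H \succeq 0$, the diagonal support restriction forces all off-$J$ rows and columns of $\tilde\Pi$ to vanish, so the restricted problem effectively lives on the $|J|\times|J|$ block; the upper bound $\rho < \lambda_{diff}^2 / [4|J|(\lambda_{diff}+8\lambda_1(\Sigma))]$ will be used to certify strong concavity and hence uniqueness of $\tilde\Pi$. Writing KKT for the restricted problem yields Lagrange multipliers for the Fantope constraint $0 \preceq H \preceq \Ibb_p$ together with the trace constraint and a subgradient of $\|\cdot\|_{1,1}$ supported on $J$. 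I would then extend this subgradient to $J^c$ via the stationarity equation of the unrestricted problem and verify strict dual feasibility, namely that its $\ell_\infty$ magnitude on $J^c$ remains strictly below $1$. \Cref{assume:corr-con} enters precisely here: the extension carries a factor $|J|/\lambda_{diff}$ multiplying $\|\Sigma_{J^c,J}\|_{2,\infty}$, plus a residual term of order $\ninf{S-\Sigma}/\rho$, and the two-sided window on $\rho$ combined with the bound from ingredient (i) keeps both contributions strictly below $1$. Strict dual feasibility then certifies $\hat{\Pi}=\tilde{\Pi}$, so $supp(\hat{\Pi}) \subseteq J$. A perturbation analysis based on the spectral gap gives $\ninf{\tilde{\Pi}-\Pi} \le \constant_\pi \rho$, and the remaining lower bound $\rho < \lambda_{diff}\min_{j\in J}\Pi_{j,j}/(16|J|)$ ensures no true diagonal entry is shrunk to zero, yielding $supp(\tilde\Pi) = J$ and the stated $\ell_{\infty,\infty}$ bound.

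The hardest step will be the dual-feasibility verification: the Fantope constraint generates matrix-valued Lagrange multipliers that do not commute with the entrywise subgradient of $\|\cdot\|_{1,1}$, so extracting an explicit formula for the subgradient extension on $J^c$ in terms of $\tilde\Pi_{J,J}$, $\Sigma_{J^c,J}$, and an inverse-like operator governed by the gap $\lambda_{diff}$, while keeping the $|J|/\lambda_{diff}$ scaling aligned with \Cref{assume:corr-con}, is the core computation that will absorb the bulk of the technical work.
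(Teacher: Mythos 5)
Your proposal follows essentially the same route as the paper: the same primal--dual witness construction adapted from Lei--Vu, the same decomposition of $\ninf{S-\Sigma}$ into a sampling term handled by \Cref{lem:TailSampleCov} (with $\eta\asymp\lambda^{\dagger}$, giving $\rho_2$) and a model-perturbation term handled by matrix concentration under \Cref{assume:unbiased} (giving $\rho_1$), and the same dual-feasibility check in which \Cref{assume:corr-con} absorbs the $|J|/\lambda_{diff}$ factor on $\|\Sigma_{J^c,J}\|_{2,\infty}$ plus a residual $\ninf{S-\Sigma}/\rho$. The only slight difference is bookkeeping: in the paper the upper bound $\rho<\lambda_{diff}^2/[4|J|(\lambda_{diff}+8\lambda_1(\Sigma))]$ is used to guarantee the eigengap of $S-\rho\hat Z$ (so the witness spans its principal subspace and the Fantope maximiser is unique), rather than strong concavity of the restricted problem, whose uniqueness comes from the cited Lei--Vu lemma together with an explicit rotation $Q$ used to build the certificate on $J\times J^c$.
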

In other words, \cref{thm:meta} gives us our required sample complexity for sign consistency and error bound.
\subsection{Sketch of Proof for Theorem \ref{thm:meta}}\label{sec:proof-sketch-meta}
The complete proof of this theorem may be found at \cref{sec:proof-meta} and we only make the outline here. The proof of \Cref{thm:meta} is via primal-dual witness method. It starts by writing down the objective function \cref{eq:meta_objective_pooled}.
\begin{align*}
   & \max_{H\in\Fcal^k}\inner{S}{H}-\rho \|H\|_{1,1}.
\end{align*}
Define the subspace
\begin{small}
\[\bbb_p:= \lc Z \in \real^{p\times p}:diag(Z)=0, Z=Z^T, \ninf{Z}\leq 1\rc.
\]
\end{small}

Using strong-duality of $\|\cdot\|_{1,1}$, we can rewrite the objective function as,
\begin{align*}
    \max_{\substack{H\in \Fcal^k\\ Z\in \bbb}}\inner{S}{H}-\rho \inner{H}{Z+\Ibb_p}
    \equiv \ \max_{\substack{H\in \Fcal^k \\Z\in \bbb}}\inner{S}{H}-\rho \inner{H}{Z} -\rho k
    \equiv \ & \max_{\substack{H\in \Fcal^k\\ Z\in \bbb}}\inner{H}{S-\rho Z},\numberthis\label{eq:proof-sketch-dual}
\end{align*}
where the last step follows from the fact that $k$ is a constant. To maximise \Cref{eq:proof-sketch-dual}, we need to find solutions $(\hat H,\hat Z)$ such that Karush-Kuhn-Tucker (KKT) \cite{boyd2004convex} conditions 
\begin{align}
    \hat{Z}_{i,j} = sign(\hat H_{i,j}) & \forall \enspace i\neq j, \hat H_{i,j}\neq 0,~\label{eq:kkt-1}\\
    \hat{Z}_{i,j} \in [-1,1]  \forall & \enspace i\neq j, \hat H_{i,j} = 0,~\label{eq:kkt-2}\\
    \hat H =\arg \max & \inner{S-\rho \hat Z}{H},~\label{eq:kkt-3}
\end{align}
are satisfied. Proceeding according to the primal-dual witness method, we construct the constrained optimisation problem where the $supp(H)\subseteq J$.
\begin{align*}
    \max_{H\in\Fcal^k,supp(H)\subseteq J}\inner{S}{H}-\rho \|H\|_{1,1}.\numberthis~\label{eq:constrained-objective}
\end{align*}
Let $(\tilde H,\tilde Z)$ be the corresponding primal and dual variables. The proof selects an appropriate $|J|\times |J|$ orthonormal matrix $Q$ according to 
\cref{lem:sparse-solution} with the correct rotation such that the Frobenius norm between the true eigenvectors $U_{J,*}$ and the estimated eigenvectors $\hat U_{J,*}$ is small. Then we propose the following solution $(\hat H,\hat Z)$ to the dual objective \cref{eq:proof-sketch-dual}. It is noteworthy that this construction is similar to that in Theorem 1 in \cite{lei2015sparsistency}.
\begin{align}
 &   \hat{H} = \tilde H,
\nonumber
\label{eq:dual-construct-1}\\
&\hat{Z}_{J,J}= \tilde Z_{J,J},
\\
\label{eq:dual-construct-2}
&\hat{Z}_{ij}= \frac{1}{\rho} \bigl\{S_{ij}-\langle
Q_{i,*}, \Sigma_{J,j} \rangle \bigr\},  (i,j)\in J\times
J^c,
\\
\label{eq:dual-construct-3}
&\hat{Z}_{ij} =\frac{1}{\rho} (S-\Sigma)_{ij},  (i,j)\in
J^c\times J^c, i\neq j.
\end{align}

The rest of the proof follows in $4$ parts.
\begin{enumerate}
 \item Finding an appropriate $Q$ matrix and an appropriate tail probability bound under Assumption \ref{assume:unbiased} by using a combination of Matrix Chernoff bound (see \cref{app:desiderata}) and \Cref{lem:TailSampleCov}.
 \item Proving that the proposed solution satisfies the KKT conditions (\ref{eq:kkt-1}), (\ref{eq:kkt-2}), and (\ref{eq:kkt-3}) with high probability for our choices of $\rho$ and is unique.
  \item Find an upper bound for estimation error. We achieve this through the following upper bound, $\ninf{\hat H - \Pi}<2\frac{\rho|J|}{\lambda_{diff}}$, which is a mild variation of a similar upper bound from Lemma 2 in \cite{vu2013fantope}. Thus, for sufficiently small enough $\rho$, we prove that $sign(\hat \Pi)=sign(\Pi)$, and our estimate is sign consistent.
  \item Showing that the proposed solution is sign consistent using a similar technique to part 3.
\end{enumerate}
\section{Support Recovery for the Novel Task}~\label{sec:novel-support-recovery}

In this section, we show that $O\lp\log |J|\rp$ sample complexity is sufficient for the support recovery for a novel task where $J$ is the recovered support union in Section \ref{sec:meta-support-recovery}. We also show that for the novel task, the estimation problem can be reduced from estimating a $p\times p$ dimensional matrix to a $|J|\times |J|$ dimensional matrix. Let $n^{(m+1)}$ be the number of samples in the novel task. We begin by establishing the reduction in dimension. Let $\Sigma^{(m+1)}$ be the variance matrix corresponding to the novel task with spectral decomposition
\begin{align*}
    U^{(m+1)}\Lambda^{(m+1)}\lp U^{(m+1)}\rp^T,
\end{align*}
and the $k$-dimensional principal component matrix $\Pi^{(m+1)}$. Throughout the rest of this section, we assume that $\Sigma^{(m+1)}$ satisfies \Cref{assume:unique} and \Cref{assume:sparse}. We write $\Sigma^{(m+1)}$ and its estimator $S^{(m+1)}$ as the following block matrices
\begin{align}~\label{eq:block-cov-est}
    \Sigma^{(m+1)} = \begin{bmatrix}
    \Sigma_{J,J}^{(m+1)} & \Sigma_{J,J^c}^{(m+1)}\\
    \Sigma_{J^c,J}^{(m+1)} & \Sigma_{J^c,J^c}^{(m+1)}
    \end{bmatrix}, & \quad S^{(m+1)} =
    \begin{bmatrix}
    S_{J,J}^{(m+1)} & S_{J,J^c}^{(m+1)}\\
    S_{J^c,J}^{(m+1)} & S_{J^c,J^c}^{(m+1)}
    \end{bmatrix}.
\end{align}
We obtain the $k$-dimensional principal component of the novel task $\Pi^{(m+1)}$ by optimising the objective function,
\begin{align*}
    \max_{H\in \Fcal^k}\inner{S^{(m+1)}}{H}-\rho \|H\|_{1,1}.~\numberthis~\label{eq:novel_objective}
\end{align*}
However, we have already recovered the support union $J$. Hence, we can add another constraint on $H$: $supp(H)=J$. Moreover, since the row and column $i$ of the principal component $\Pi^{(m+1)}$ is $0$ if and on $\Pi_{i,i}^{(m+1)}=0$, we can rewrite this additional constraint as,
\begin{align*}
    H= \begin{bmatrix}
    H_{J,J} & 0\\
    0 & 0
    \end{bmatrix}.
\end{align*}
Since $H\in \Fcal^k$, the maximisation problem only makes sense if $|J|>k$. This also automatically implies that $H\in \Fcal_J^k$, where $\Fcal_J^k := \lc M\in \real^{|J|\times |J|} : 0\preceq M \preceq \Ibb_{|J|} \text{ and } \tr(M)=k \rc$.
Therefore, the objective corresponding to the novel task maximises $\inner{S^{(m+1)}}{H}-\rho \|H\|_{1,1}$ subject to $H\in \Fcal^k$ and $supp(H)\subseteq J$.
This is equivalent to maximising
\begin{align*}
    {\hat\Pi^{(m+1)}}:=\arg\sup_{H_{J,J}}\ \inner{S_{J,J}^{(m+1)}}{H_{J,J}}-\rho \|H_{J,J}\|_{1,1}, \text{ for  } H_{J,J}\in \Fcal_J^k~\numberthis
\end{align*}
where $supp(\hat{\Pi})=J$ from \cref{thm:meta} is the recovered support union, and $H_{J,J},S_{J,J}\in\real^{|J|\times|J|}$. The following proposition on the eigenvalues and eigenvectors of $\Sigma_{J,J}^{(m+1)}$ shows that the estimation problem can be reduced from estimating a $p\times p$ dimensional matrix to a $|J|\times |J|$ dimensional matrix. 
\begin{proposition}~\label{prop:sub_eigen}
For $1\leq l\leq k$, $U_{l,J}^{(m+1)}$ are the first $k$ eigenvectors of $\Sigma_{J,J}^{(m+1)}$ with corresponding eigenvalues $\lambda_l^{(m+1)}$.
\end{proposition}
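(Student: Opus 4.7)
The plan is to exploit the sparsity assumption (\Cref{assume:sparse}) on the novel task, which says $supp(\Pi^{(m+1)})\subseteq J$, and translate it into a zero-pattern statement about the leading eigenvectors, then read off the eigen-equation block-wise.

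\textbf{Step 1: From diagonal support to eigenvector zeros.} First I would recall that since $\Pi^{(m+1)}=\sum_{l=1}^k U_{*,l}^{(m+1)}(U_{*,l}^{(m+1)})^T$, the $i$-th diagonal entry satisfies
\begin{align*}
\Pi^{(m+1)}_{i,i}=\sum_{l=1}^k \bigl(U_{i,l}^{(m+1)}\bigr)^2,
\end{align*}
so $\Pi^{(m+1)}_{i,i}=0$ if and only if $U_{i,l}^{(m+1)}=0$ for every $l\in[k]$. Since \Cref{assume:sparse} gives $supp(\Pi^{(m+1)})\subseteq J$, it follows that $U_{J^c,l}^{(m+1)}=0$ for each $l=1,\dots,k$; equivalently the leading $k$ eigenvectors are supported on $J$.

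\textbf{Step 2: Block form of the eigen-equation.} Using the block partition of $\Sigma^{(m+1)}$ in \Cref{eq:block-cov-est} and plugging the eigenvector $U_{*,l}^{(m+1)}=\bigl[(U_{J,l}^{(m+1)})^T,\,0\bigr]^T$ into $\Sigma^{(m+1)}U_{*,l}^{(m+1)}=\lambda_l^{(m+1)}U_{*,l}^{(m+1)}$, the top block reads
\begin{align*}
\Sigma_{J,J}^{(m+1)}\,U_{J,l}^{(m+1)}=\lambda_l^{(m+1)}\,U_{J,l}^{(m+1)},
\end{align*}
which directly exhibits $U_{J,l}^{(m+1)}$ as an eigenvector of $\Sigma_{J,J}^{(m+1)}$ with eigenvalue $\lambda_l^{(m+1)}$. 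The bottom block $\Sigma_{J^c,J}^{(m+1)}U_{J,l}^{(m+1)}=0$ is automatically consistent because the full vector is already an eigenvector of $\Sigma^{(m+1)}$; no extra check is needed there.

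\textbf{Step 3: These are the top $k$ eigenpairs.} Orthonormality of $\{U_{*,l}^{(m+1)}\}_{l=1}^k$ together with the zero-pattern gives orthonormality of $\{U_{J,l}^{(m+1)}\}_{l=1}^k$ in $\real^{|J|}$, so I would have $k$ orthonormal eigenvectors of $\Sigma_{J,J}^{(m+1)}$ with eigenvalues $\lambda_1^{(m+1)},\dots,\lambda_k^{(m+1)}$. To conclude these are the \emph{first} $k$ eigenpairs of the submatrix, I would appeal to Cauchy's interlacing theorem (or equivalently the Courant--Fischer min-max characterisation), which bounds the $l$-th eigenvalue of any principal submatrix from above by the $l$-th eigenvalue of the full matrix. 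Combined with \Cref{assume:unique} (strict spectral gap, ensuring no multiplicity ambiguity) this forces $\lambda_l(\Sigma_{J,J}^{(m+1)})=\lambda_l^{(m+1)}$ for $l=1,\dots,k$, and hence the $U_{J,l}^{(m+1)}$ are the leading eigenvectors.

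\textbf{Main obstacle.} The argument is essentially a clean bookkeeping exercise; there is no analytic hard step. The only subtle point is step 3: one must ensure that the submatrix does not have some \emph{larger} eigenvalue that is not inherited from $\Sigma^{(m+1)}$. This is ruled out precisely by Cauchy interlacing, so the ``obstacle'' is just remembering to invoke it rather than taking for granted that principal-submatrix eigenvalues coincide with the corresponding full-matrix ones.
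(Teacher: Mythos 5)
Your Steps 1 and 2 are exactly the paper's proof: the paper likewise expands $\Pi^{(m+1)}_{i,i}=\sum_{l=1}^k\bigl(U^{(m+1)}_{i,l}\bigr)^2$ to conclude that the leading eigenvectors vanish on $J^c$, writes each as $\bigl[(U_{J,l}^{(m+1)})^T,\,0\bigr]^T$, and reads the top block of $\Sigma^{(m+1)}U_{*,l}^{(m+1)}=\lambda_l^{(m+1)}U_{*,l}^{(m+1)}$ to get $\Sigma_{J,J}^{(m+1)}U_{J,l}^{(m+1)}=\lambda_l^{(m+1)}U_{J,l}^{(m+1)}$, at which point it stops. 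Where you differ is Step 3: the paper never argues that these eigenpairs are the \emph{first} $k$ of the submatrix (and hence that the spectral gap is inherited by $\Sigma_{J,J}^{(m+1)}$, a fact it invokes right after the proposition); your Cauchy-interlacing/Courant--Fischer argument supplies precisely this missing justification, since interlacing gives $\lambda_l(\Sigma_{J,J}^{(m+1)})\le\lambda_l^{(m+1)}$ while the $k$ orthonormal eigenvectors you exhibit give the matching lower bound, forcing equality for $l\le k$. One small refinement: the equality of eigenvalues follows from this two-sided bound alone; \Cref{assume:unique} is only needed to identify the corresponding eigenvectors as \emph{the} leading ones (uniqueness of the principal subspace), not for the interlacing step itself. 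So your proof is correct, follows the paper's route, and is in fact slightly more complete than the paper's own argument.
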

The proof of this proposition is given in \Cref{sec:proof-sub-eigen}. Following from the fact that $\Sigma^{(m)}$ satisfies \Cref{assume:unique} and \Cref{assume:sparse}, \cref{prop:sub_eigen} establishes as an immediate consequence that 
\begin{align*}
    & \lambda_k^{(m+1)}-\lambda_{k+1}^{(m+1)}>0, \text{ and, } supp(\Pi^{(m+1)})\subseteq J^{(m+1)}.
\end{align*}
This establishes that the $k$-dimensional PC matrix of $\Sigma_{J,J}^{(m+1)}$ is sparse and unique, as well as the solution to a lower dimensional objective function.

Overloading notation, define $\lambda_{diff}^{(m+1)}:=\lambda_k^{(m+1)}-\lambda_{k+1}^{(m+1)}$. We make the following technical assumption on the elements of $\Sigma_{J,J}^{(m+1)}$. 
\begin{assumption}~\label{assume:novel-corr-cond} Let $\Sigma^{(m+1)}$ be the covariance matrix corresponding to the novel task and $J$ be the recovered support union. Then,
\begin{align*}
    \frac{8|J^{(m+1)}|}{\lambda_{diff}^{(m+1)}}\lVert \Sigma^{(m+1)}_{(J^{(m+1)})^c, J^{(m+1)}}\rVert _{2,\infty}<1.
\end{align*}
\end{assumption}
We define 
\begin{align*}
    &\constant_{n1}:=\frac{1}{32\lambda_1\lp\Sigma^{(m+1)}\sigma^2\rp},\quad  \constant_{n2}:=\frac{4|J^{(m+1)}|\lp 1+
    \frac{8\lambda
    _1^{(m+1)}}{\lambda_k^{(m+1)}-\lambda_{k+1}^{(m+1)}}\rp}{\lambda_k^{(m+1)}-\lambda_{k+1}^{(m+1)}},\\
    &\constant_{n3}:=\frac{\lp\lambda_k^{(m+1)}-\lambda_{k+1}^{(m+1)}\rp\min_{j\in\lb|J^{(m+1)}|\rb}\Pi_{j,j}^{(m+1)}}{4|J^{(m+1)}|}.
\end{align*}The following theorem shows the $O(\log |J|)$ sample complexity for support recovery of the novel task.
\begin{theorem}[Novel Task Support Recovery]~\label{thm:novel} Let $\Sigma^{(m+1)}$ be the covariance matrix corresponding to the novel task satisfying \Cref{assume:novel-corr-cond} and $J$ be the recovered support union. Then, there exists a constant $\alpha^{(m+1)}\in(0,1)$ such that if
\begin{align*}
 &n^{\frac{1}{3}}>\max\lc\constant_{n1},\constant_{n2},\constant_{n3}\rc \text{ and, } \frac{1}{ n^{1/3}\alpha^{(m+1)}}<\rho< \frac{\lambda_k^{(m+1)}-\lambda_{k+1}^{(m+1)}}{4|J^{(m+1)}|\lp 1+\frac{8\lambda
_1^{(m+1)}}{\lambda_k^{(m+1)}-\lambda_{k+1}^{(m+1)}}\rp},
\end{align*}
then, with probability, $1-\frac{|J|(|J|+1)}{2}e^{-\frac{n^{1/3}}{8\constant\sigma^2\lp2\lambda_1(\Sigma^{(m+1)})\rp}}$, the maximiser $\hat{\Pi}^{(m+1)}$ to the objective function \Cref{eq:novel_obj}  satisfies, $supp(\hat{\Pi}^{(m+1)})=J^{(m+1)}$.
\end{theorem}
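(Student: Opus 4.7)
The plan is to reduce the novel-task problem to a $|J|\times|J|$ optimisation using \Cref{prop:sub_eigen}, and then to run a primal-dual witness argument analogous to the sketch in \Cref{sec:proof-sketch-meta} on this lower-dimensional problem. First I would combine \Cref{prop:sub_eigen} with the inclusion $supp(\Pi^{(m+1)})\subseteq J^{(m+1)}\subseteq J$ (the first from \Cref{assume:sparse} applied to $\Sigma^{(m+1)}$, the second because $J$ was recovered as the support union) to recast \Cref{eq:novel_obj} as $\max_{H_{J,J}\in\Fcal_J^k}\inner{S_{J,J}^{(m+1)}}{H_{J,J}}-\rho\|H_{J,J}\|_{1,1}$. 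The target $\Pi_{J,J}^{(m+1)}$ inherits the spectral gap $\lambda_{diff}^{(m+1)}$ of $\Pi^{(m+1)}$ by \Cref{prop:sub_eigen}, and \Cref{assume:novel-corr-cond} is exactly the correlation/irrepresentability condition for this reduced problem.

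Next I would apply \Cref{lem:TailSampleCov} with $m=1$ and degenerate $\Gcal^{(m+1)}=\Sigma^{(m+1)}$ to the novel-task samples $\{X_j^{(m+1)}\}_{1\le j\le n}$, taking $\eta=2\lambda_1(\Sigma^{(m+1)})$ and $\epsilon=\alpha^{(m+1)}\rho$. Since the hypothesis $\rho > 1/(n^{1/3}\alpha^{(m+1)})$ gives $\epsilon>n^{-1/3}$, one has $n\epsilon^2\ge n^{1/3}$; restricting attention to indices in $J\times J$ replaces $p$ by $|J|$ in the prefactor and reproduces exactly the probability bound in the theorem. The condition $n^{1/3}>\constant_{n1}$ is precisely what guarantees $\epsilon<32\eta\sigma^2$, so that \Cref{lem:TailSampleCov} is in fact applicable.

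On this good event I would construct a primal-dual witness $(\tilde H_{J,J},\tilde Z_{J,J})$ by first solving the further-restricted problem with $supp(H_{J,J})\subseteq J^{(m+1)}$, and then extending $\tilde Z$ to the rest of $J\times J$ exactly as in \eqref{eq:dual-construct-1}--\eqref{eq:dual-construct-3}, with $(\Sigma,S)$ replaced by $(\Sigma^{(m+1)},S^{(m+1)})$. The Lemma 2 of \cite{vu2013fantope} style bound $\ninf{\tilde H_{J,J}-\Pi_{J,J}^{(m+1)}}\le 2\rho|J^{(m+1)}|/\lambda_{diff}^{(m+1)}$, combined with $\rho<\constant_{n3}$, then forces $sign(\tilde H_{J,J})$ and $sign(\Pi_{J,J}^{(m+1)})$ to agree on $J^{(m+1)}$, so the KKT conditions \eqref{eq:kkt-1} and \eqref{eq:kkt-3} hold for the witness.

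The main obstacle is verifying strict dual feasibility \eqref{eq:kkt-2} on $(J\times J)\setminus(J^{(m+1)}\times J^{(m+1)})$. For $(i,j)\in J^{(m+1)}\times(J^{(m+1)})^c$, the construction writes $\hat Z_{ij}$ as the sum of an irrepresentability term (with norm at most $\tfrac{8|J^{(m+1)}|}{\lambda_{diff}^{(m+1)}}\|\Sigma^{(m+1)}_{(J^{(m+1)})^c,J^{(m+1)}}\|_{2,\infty}<1$ by \Cref{assume:novel-corr-cond}) and a sampling-noise term of order $\tfrac{1}{\rho}\ninf{S^{(m+1)}-\Sigma^{(m+1)}}<\alpha^{(m+1)}$, so $|\hat Z_{ij}|<1$ strictly; for $(i,j)\in(J^{(m+1)})^c\times(J^{(m+1)})^c$ only the noise term survives and the same argument applies. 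The delicate point is that the window $[1/(n^{1/3}\alpha^{(m+1)}),\min\{1/\constant_{n2},\constant_{n3}\}]$ for $\rho$ must be non-empty, which is exactly what $n^{1/3}>\max\{\constant_{n1},\constant_{n2},\constant_{n3}\}$ guarantees for a suitable $\alpha^{(m+1)}\in(0,1)$; simultaneously balancing these two-sided constraints on $\rho$ against the tail bound is the central technical step of the proof.
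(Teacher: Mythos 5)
Your proposal is correct and follows essentially the same route as the paper: reduce to the $|J|\times|J|$ problem via \Cref{prop:sub_eigen}, invoke the primal-dual witness machinery of \Cref{thm:meta} on the reduced objective with \Cref{assume:novel-corr-cond} supplying the irrepresentability slack $\alpha^{(m+1)}$, and control $\ninf{S_{J,J}^{(m+1)}-\Sigma_{J,J}^{(m+1)}}$ by \Cref{lem:TailSampleCov} with $m=1$ and $\epsilon$ of order $\alpha^{(m+1)}\rho>n^{-1/3}$, which yields exactly the stated probability and the role you assign to $\constant_{n1}$, $\constant_{n2}$, $\constant_{n3}$. The only difference is that you spell out the witness construction and the dual-feasibility check explicitly, which the paper compresses into ``similarly to the proof of \Cref{thm:meta}.''
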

The proof of this theorem is deferred to \Cref{sec:proof-novel}
\section{Experiments}
In this section, we empirically validate our theoretical claims in Sections \ref{sec:meta-support-recovery} and \ref{sec:novel-support-recovery}. 
\paragraph{Gaussian Distribution Setting.}\label{sec:gauss-err}
For all the experiments in this subsection, we let $p=50$, $k=5$ and $|J|=5$ and perform 100 repetitions of each setting. We first consider the setting of different sample sizes. We choose $n\in \lc 3,5,7,9\rc$ and use $\rho = \sqrt{\frac{\log (p+1)}{mn}}$ for all pairs of $(m,n)$. The number of tasks $m$ is rescaled to $T$ defined by $\frac{mn}{\log (p+1)}$. For each $j\in[n]$, $X^{(i)}_j$ is generated
from a Gaussian distribution with mean a $p$-dimensional $0$ vector and covariance matrix $\Sigma^{(i)}$. \Cref{fig:small-dim} depicts the outcome of our experiments. For different choices of $n$, the graphs overlap each other perfectly (both $\prob(\hat{J}=J)$ and $\|{\hat{\Pi}-\Pi}\|_{\infty,\infty}$). Then, we took the recovered support union and derived the probability of support recovery for a novel task. 

{We perform the experiments when there are $2,3$, or $4$ extra zeros and all of those have greater than $95\%$ probability of accurately identifying the extra zeros. Moreover, in \Cref{sec:additional-exp} we provide more details and show extra results for the uniform and the mixture of sub-Gaussian settings.
\paragraph{Real World Experiments.} In \Cref{app:real-world}, we show that our method obtains anywhere between $85-95\%$ accuracy in a real world brain-imaging dataset.}
\begin{figure}[!h]
    \centering
    \subfloat[Probability of Support Union Recovery]{{\includegraphics[width=0.3\textwidth]{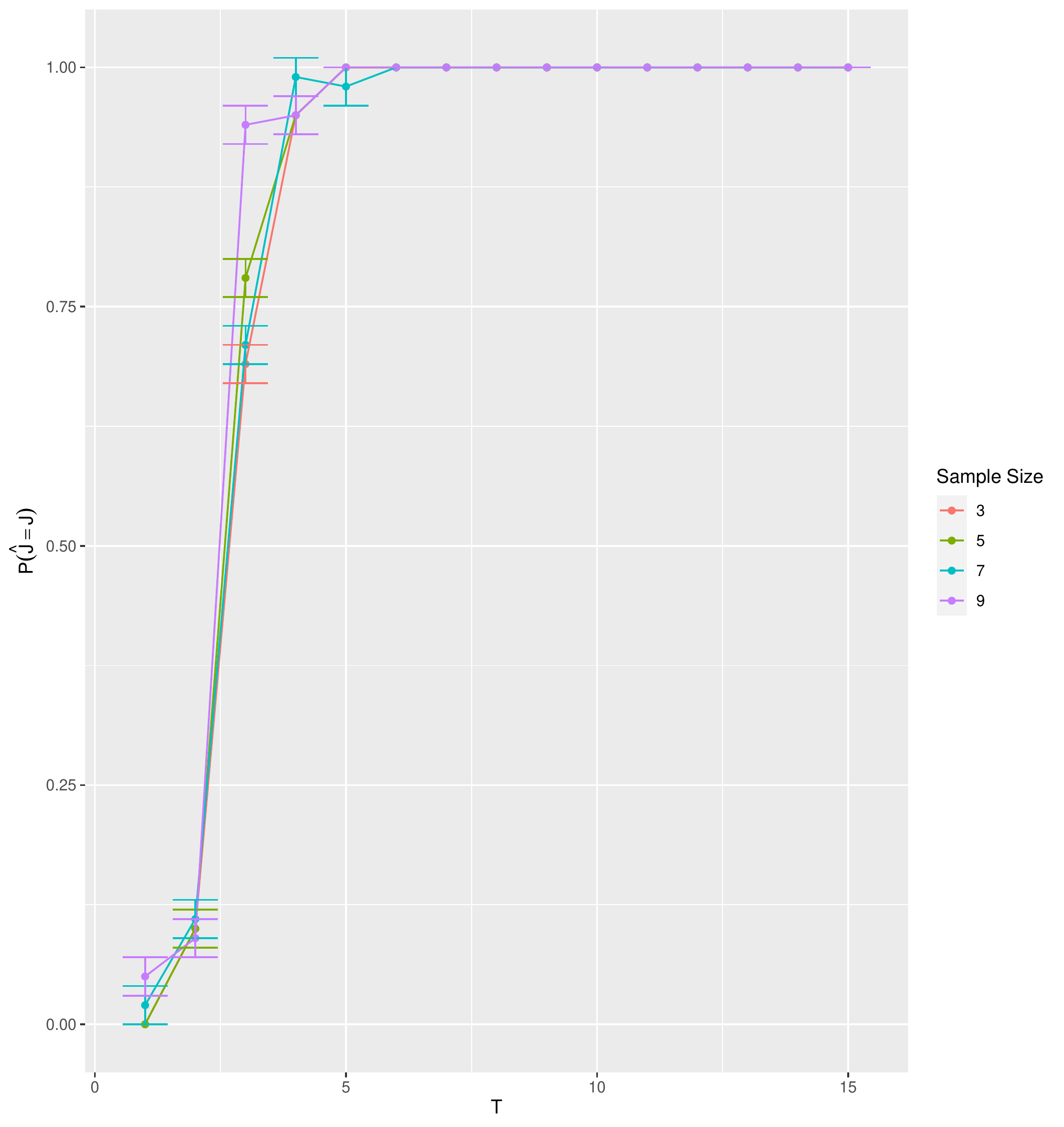}}}
    \subfloat[Maximal Error Bound]{{\includegraphics[width=0.3\textwidth]{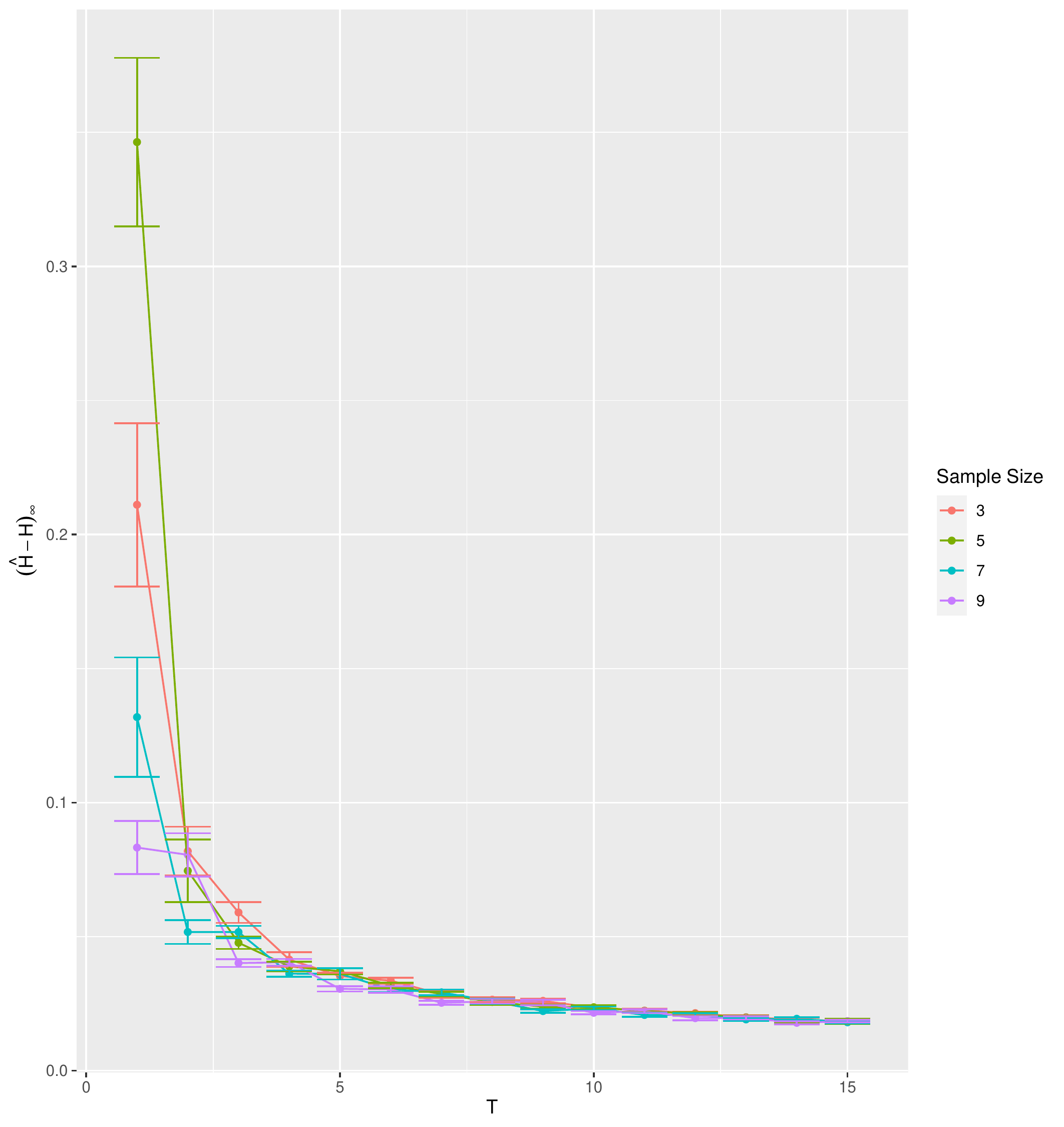}}}
    \subfloat[Support Recovery of Novel Task]{\includegraphics[width=0.35\textwidth]{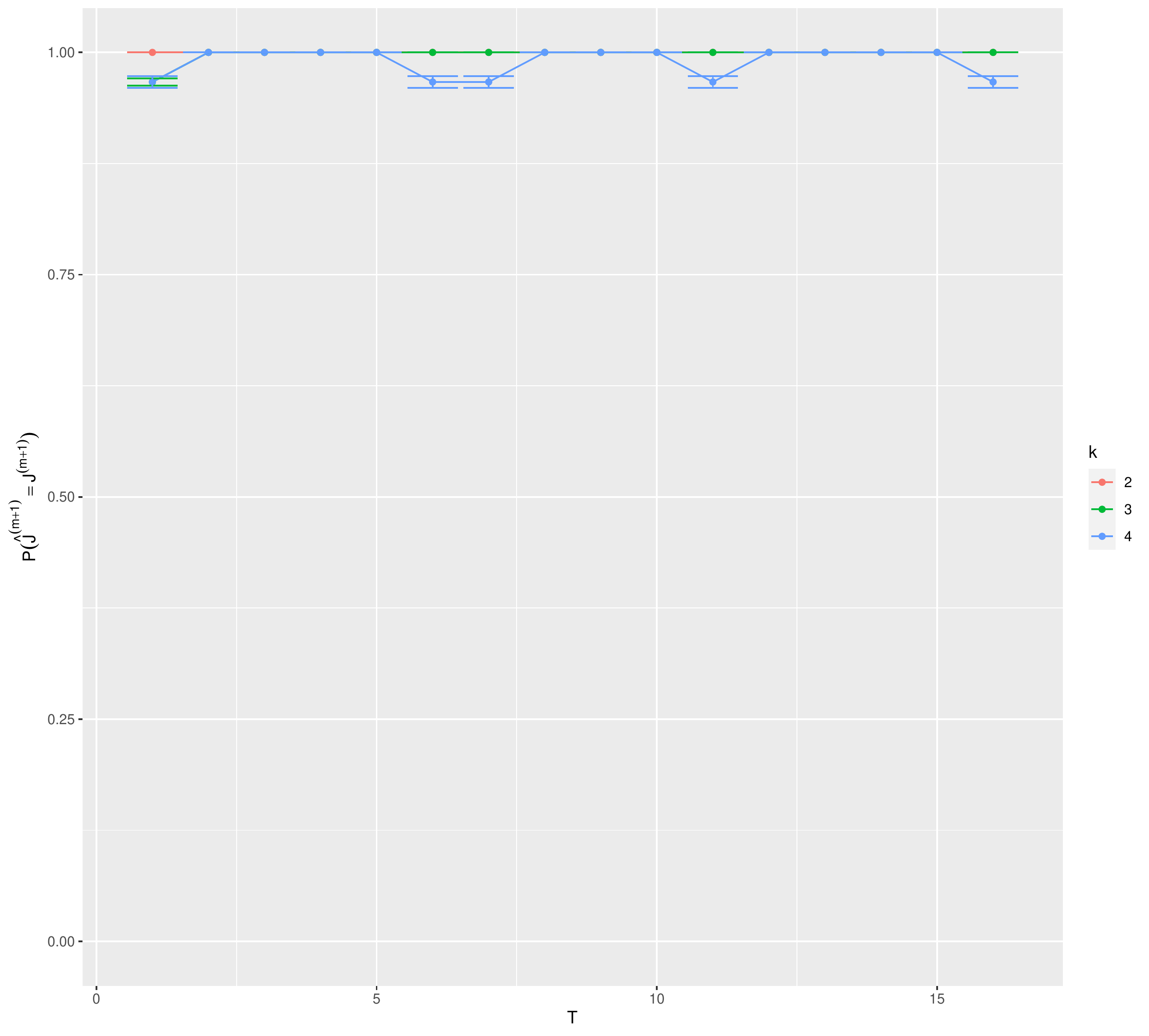}}
    \caption{Left, Center: Simulations for \Cref{thm:meta} on the probability of support union recovery and maximal error under various settings of $n$. $\rho = \sqrt{\frac{\log (p+1)}{mn}}$. The x-axis is set by $T:=\frac{mn}{\log(p+1)}$ varying from $\lc1,\dots,15\rc$ for support union recovery. Right: Probability of support recovery for novel task. $\rho = \sqrt{\frac{\log (|J|+1)}{n}}$. The x-axis is set by $T:=\frac{n}{\log(|J|+1)}$ varying from $\lc10,\dots,25\rc$ }
    \label{fig:small-dim}
\end{figure}
\section{Future Studies}
There still exists some open problems regarding Meta Sparse PCA that were not addressed in this work. Dependencies in the data, such as when the data is from a time series makes an interesting such case we do not address. Similarly, it is also unclear how estimation should proceed when there is missing data. Finally, we would like to point out that there are newer techniques such as Kernel PCA which can describe non-linear boundaries and it is unclear how Meta sparse PCA matrices should proceed under such circumstances. 

\bibliography{biblio}

\begin{thebibliography}{10}

\bibitem{amini2008high}
Arash~A Amini and Martin~J Wainwright.
\newblock High-dimensional analysis of semidefinite relaxations for sparse
  principal components.
\newblock In {\em 2008 IEEE international symposium on information theory},
  pages 2454--2458. IEEE, 2008.

\bibitem{boyd2004convex}
Stephen Boyd, Stephen~P Boyd, and Lieven Vandenberghe.
\newblock {\em Convex optimization}.
\newblock Cambridge university press, 2004.

\bibitem{deshpande2014it}
Yash Deshpande and Andrea Montanari.
\newblock Information-theoretically optimal sparse pca.
\newblock In {\em 2014 IEEE International Symposium on Information Theory},
  pages 2197--2201, 2014.

\bibitem{honorio2014tight}
Jean Honorio and Tommi Jaakkola.
\newblock Tight bounds for the expected risk of linear classifiers and
  pac-bayes finite-sample guarantees.
\newblock In {\em Artificial Intelligence and Statistics}, pages 384--392.
  PMLR, 2014.

\bibitem{jenatton2010structured}
Rodolphe Jenatton, Guillaume Obozinski, and Francis Bach.
\newblock Structured sparse principal component analysis.
\newblock In {\em Proceedings of the Thirteenth International Conference on
  Artificial Intelligence and Statistics}, pages 366--373. JMLR Workshop and
  Conference Proceedings, 2010.

\bibitem{johnstone2009consistency}
Iain~M Johnstone and Arthur~Yu Lu.
\newblock On consistency and sparsity for principal components analysis in high
  dimensions.
\newblock {\em Journal of the American Statistical Association},
  104(486):682--693, 2009.

\bibitem{lei2015sparsistency}
Jing Lei and Vincent~Q Vu.
\newblock Sparsistency and agnostic inference in sparse pca.
\newblock {\em The Annals of Statistics}, 43(1):299--322, 2015.

\bibitem{lin2008development}
Chin-Teng Lin, Yu-Chieh Chen, Teng-Yi Huang, Tien-Ting Chiu, Li-Wei Ko,
  Sheng-Fu Liang, Hung-Yi Hsieh, Shang-Hwa Hsu, and Jeng-Ren Duann.
\newblock Development of wireless brain computer interface with embedded
  multitask scheduling and its application on real-time driver's drowsiness
  detection and warning.
\newblock {\em IEEE Transactions on Biomedical Engineering}, 55(5):1582--1591,
  2008.

\bibitem{lounici2013sparse}
Karim Lounici.
\newblock Sparse principal component analysis with missing observations.
\newblock In {\em High dimensional probability VI}, pages 327--356. Springer,
  2013.

\bibitem{ma2013sparse}
Zongming Ma.
\newblock Sparse principal component analysis and iterative thresholding.
\newblock {\em The Annals of Statistics}, 41(2):772--801, 2013.

\bibitem{meinshausen2006high}
Nicolai Meinshausen and Peter B{\"u}hlmann.
\newblock High-dimensional graphs and variable selection with the lasso.
\newblock {\em The annals of statistics}, 34(3):1436--1462, 2006.

\bibitem{meinshausen2009lasso}
Nicolai Meinshausen and Bin Yu.
\newblock Lasso-type recovery of sparse representations for high-dimensional
  data.
\newblock {\em The annals of statistics}, 37(1):246--270, 2009.

\bibitem{mourtada2022improper}
Jaouad Mourtada and St{\'e}phane Ga{\"\i}ffas.
\newblock An improper estimator with optimal excess risk in misspecified
  density estimation and logistic regression.
\newblock {\em J. Mach. Learn. Res.}, 23:31--1, 2022.

\bibitem{nadler2008finite}
Boaz Nadler.
\newblock Finite sample approximation results for principal component analysis:
  A matrix perturbation approach.
\newblock {\em The Annals of Statistics}, 36(6):2791--2817, 2008.

\bibitem{overton1992sum}
Michael~L Overton and Robert~S Womersley.
\newblock On the sum of the largest eigenvalues of a symmetric matrix.
\newblock {\em SIAM Journal on Matrix Analysis and Applications}, 13(1):41--45,
  1992.

\bibitem{park2019sparse}
Seyoung Park and Hongyu Zhao.
\newblock Sparse principal component analysis with missing observations.
\newblock {\em The Annals of Applied Statistics}, 13(2):1016--1042, 2019.

\bibitem{paul2007asymptotics}
Debashis Paul.
\newblock Asymptotics of sample eigenstructure for a large dimensional spiked
  covariance model.
\newblock {\em Statistica Sinica}, pages 1617--1642, 2007.

\bibitem{persichetti2021data}
Andrew~S Persichetti, Joseph~M Denning, Stephen~J Gotts, and Alex Martin.
\newblock A data-driven functional mapping of the anterior temporal lobes.
\newblock {\em Journal of Neuroscience}, 41(28):6038--6049, 2021.

\bibitem{pollardprocesses}
David Pollard.
\newblock {\em PROCESSES: THEORY AND APPLICATIONS}.
\newblock 1984.

\bibitem{qi2013sparse}
Xin Qi, Ruiyan Luo, and Hongyu Zhao.
\newblock Sparse principal component analysis by choice of norm.
\newblock {\em Journal of multivariate analysis}, 114:127--160, 2013.

\bibitem{sun2019image}
Yaoqi Sun, Liang Li, Liang Zheng, Ji~Hu, Wenchao Li, Yatong Jiang, and
  Chenggang Yan.
\newblock Image classification base on pca of multi-view deep representation.
\newblock {\em Journal of Visual Communication and Image Representation},
  62:253--258, 2019.

\bibitem{tropp2015introduction}
Joel~A Tropp.
\newblock An introduction to matrix concentration inequalities.
\newblock {\em arXiv preprint arXiv:1501.01571}, 2015.

\bibitem{vershynin2010introduction}
Roman Vershynin.
\newblock Introduction to the non-asymptotic analysis of random matrices.
\newblock {\em arXiv preprint arXiv:1011.3027}, 2010.

\bibitem{vu2013fantope}
Vincent~Q Vu, Juhee Cho, Jing Lei, and Karl Rohe.
\newblock Fantope projection and selection: A near-optimal convex relaxation of
  sparse pca.
\newblock {\em Advances in neural information processing systems}, 26, 2013.

\bibitem{wang2021sample}
Zhanyu Wang and Jean Honorio.
\newblock The sample complexity of meta sparse regression.
\newblock In {\em International Conference on Artificial Intelligence and
  Statistics}, pages 2323--2331. PMLR, 2021.

\bibitem{zhang2021meta}
Qian Zhang, Yilin Zheng, and Jean Honorio.
\newblock Meta learning for support recovery in high-dimensional precision
  matrix estimation.
\newblock In {\em International Conference on Machine Learning}, pages
  12642--12652. PMLR, 2021.

\bibitem{zhang2008global}
Taiping Zhang, William~S Chandler, James~M Hoell, David Westberg, Charles~H
  Whitlock, and Paul~W Stackhouse.
\newblock A global perspective on renewable energy resources: Nasa’s
  prediction of worldwide energy resources (power) project.
\newblock In {\em Proceedings of ISES World Congress 2007 (Vol. I--Vol. V)},
  pages 2636--2640. Springer, 2008.

\bibitem{zhao2006model}
Peng Zhao and Bin Yu.
\newblock On model selection consistency of lasso.
\newblock {\em The Journal of Machine Learning Research}, 7:2541--2563, 2006.

\bibitem{zheng2015attribute}
Hao Zheng, Zhanlei Yang, Liwei Qiao, Jianping Li, and Wenju Liu.
\newblock Attribute knowledge integration for speech recognition based on
  multi-task learning neural networks.
\newblock In {\em Sixteenth Annual Conference of the International Speech
  Communication Association}, 2015.

\bibitem{zou2006sparse}
Hui Zou, Trevor Hastie, and Robert Tibshirani.
\newblock Sparse principal component analysis.
\newblock {\em Journal of computational and graphical statistics},
  15(2):265--286, 2006.

\end{thebibliography}
\bibliographystyle{plain}
\newpage
\appendix

\section{Technical Desiderata}~\label{app:desiderata}
{In this section of the appendix, we state some technical definitions and lemmas.
We begin with the definition of a real valued sub-Gaussian random variable \cite{zhang2021meta}.
\begin{definition}
A random variable $X$ is called sub-Gaussian with parameter $\sigma\geq 0$ if, $\forall \rho\in \mathbb{R}$ 
\begin{equation}~\label{eq:sub-Gaussian}
    \expec[e^{\rho X}]\leq e^{\frac{\rho^2\sigma^2}{2}}.
\end{equation}
\end{definition}
As noted in Example 5.8 \cite{vershynin2010introduction}, all bounded or Gaussian random variables are also sub-Gaussian. We can extend the definition of univariate sub-Gaussian distributions to define a general class of multivariate  sub-Gaussian distribution on $\real^p$. Let $\Sigma\in \real^{p\times p}$ be a positive semi-definite, real symmetric matrix. 
\begin{definition}
We say that a random vector $X\in \real^p$ follows a multivariate sub-Gaussian distribution with covariance matrix $\Sigma$ and parameter $\sigma$ if 
\begin{enumerate}
    \item $\expec[X]=0$, and $\Cov(X)=\Sigma$.
    \item For $1\leq i\leq p$, $\frac{X_i}{\sqrt{\Sigma_{i,i}}}$ is a sub-Gaussian random variable with parameter $\sigma$.
\end{enumerate}
\end{definition}
Our last definition of this section is the sub-exponential distribution \cite{pollardprocesses}. 
\begin{definition}
A random variable $X$ with mean $0$ is said to follow a sub-exponential distribution with parameters $(\eta,b)$ if, 
\begin{align*}
    \expec[e^{\lambda X}]\leq e^{\frac{\lambda^2\eta^2}{2}} \text{ for } |\lambda|<\frac{1}{b}.
\end{align*}
\end{definition}
\begin{remark}
It is well established that (see for instance Equation (6) in \cite{honorio2014tight}) the square of a sub-Gaussian random variable with parameter $\sigma$ is sub-exponential with parameters $(4\sqrt{2}\sigma^2,4\sigma^2)$. 
\end{remark}
Next, we recall the matrix Chernoff bound (Theorem 5.1.1 in \cite{tropp2015introduction}).
}
\begin{lemma}~\label{lemma:matrix-hoeff}
Let $A_i$ be random matrices such that $\expec[A_i]=0$. If $\ninf{A_i}\leq \gamma$ almost surely, then for every $\epsilon>0$,
\begin{align*}
    \prob\lp \ninf{\frac{1}{m}\sum A_i}\geq \epsilon \rp\leq 2pe^{-\frac{3\epsilon^2m}{8\gamma^2}}.
\end{align*}
\end{lemma}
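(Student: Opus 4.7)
The plan is to derive the stated tail bound as a direct corollary of the matrix Hoeffding / Chernoff inequality of Tropp (Theorem 5.1.1 in \cite{tropp2015introduction}), which is precisely what the lemma statement points to. The hypotheses line up almost verbatim: the $A_i$ are independent, centered Hermitian random matrices with a uniform spectral bound $\ninf{A_i}\leq \gamma$ almost surely, so applying Tropp's theorem to $\sum_i A_i$ and rescaling by $1/m$ immediately produces the prefactor $2p$ and the exponent $3\epsilon^2 m/(8\gamma^2)$.

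For completeness I would reproduce the standard three-step proof of the cited theorem. First, the matrix Markov inequality gives, for any Hermitian random matrix $X$ and any $\theta>0$,
\begin{align*}
\prob(\lambda_{\max}(X)\geq t) \leq e^{-\theta t}\,\expec\,\tr\, e^{\theta X},
\end{align*}
so the task reduces to controlling the trace moment generating function of $\theta\sum_i A_i$. Second, the crucial step is Lieb's concavity theorem, which yields the subadditivity bound
\begin{align*}
\expec\,\tr \exp\!\Bigl(\theta \textstyle\sum_i A_i\Bigr) \leq \tr \exp\!\Bigl(\textstyle\sum_i \log\expec\, e^{\theta A_i}\Bigr).
\end{align*}
Third, a matrix analogue of Hoeffding's lemma shows $\log\expec\, e^{\theta A_i} \preceq (\theta^2\gamma^2/2)\,\Ibb_p$ whenever $A_i$ is centered with spectral bound $\gamma$. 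Summing, one obtains $\expec\,\tr\, e^{\theta\sum_i A_i}\leq p\exp(m\theta^2\gamma^2/2)$; optimising $\theta$ in the Markov bound and applying the same argument to $-A_i$ to handle $\lambda_{\min}$ then yields the stated two-sided tail, with the constant $3/8$ produced by the joint optimisation and the prefactor $2p$ arising from the two-sidedness together with $\tr\,\Ibb_p = p$.

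The one genuine care-point, and the step I would flag as the main thing to verify before invoking the citation, is that the norm on the left-hand side is the one Tropp's theorem controls, namely the spectral/operator norm: the prefactor $2p$ (rather than $p^2$) and the constant $3/8$ are both hallmarks of the matrix-Laplace route and are inconsistent with a scalar-Hoeffding-plus-union-bound over entries, which would instead give a prefactor of order $p^2$ and the Hoeffding constant $2$ in the exponent. Provided one reads $\ninf{\cdot}$ here in the operator-norm sense that makes the constants match, the lemma follows with no further work.
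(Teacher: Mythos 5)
Your route is the same as the paper's: the paper offers no proof of this lemma beyond citing Theorem 5.1.1 of Tropp, and your sketch of the matrix-Laplace argument (matrix Markov inequality, Lieb's concavity theorem, matrix Hoeffding lemma, two-sidedness giving the factor $2$) is the standard proof of that cited result. Your flagged care-point is well taken and is in fact a real wrinkle in the paper: the notation table defines $\ninf{\cdot}$ as the entrywise maximum, under which the hypothesis $\ninf{A_i}\leq\gamma$ is strictly weaker than the spectral bound Tropp's theorem requires (an entrywise bound of $\gamma$ only guarantees an operator-norm bound of order $p\gamma$), so the lemma as literally stated is only justified under the operator-norm reading you identify — the conclusion side is harmless, since the entrywise maximum is dominated by the operator norm.
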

We then recall the following lemma from \cite{lei2015sparsistency}.
\begin{lemma}
\label{lem:sparse-solution}
Under the assumptions in Theorem~\ref{thm:meta}, let $\tilde H$ be
the solution
to the further constrained problem \eqref{eq:constrained-objective}.
Then $\tilde H$ is rank $k$ and unique.
Furthermore, there
exist $|J|\times k$ orthonormal matrices $U_J$, $\hat U_J$ such that:
\begin{enumerate}
\item $\begin{bmatrix}
    U_J\\
    0
\end{bmatrix}$ and $\begin{bmatrix}
  \hat  U_J\\
    0
\end{bmatrix}$ span the $k$-dimensional principal subspaces of $\Sigma$ and
$S-\rho\tilde Z$,
respectively.
\item
There exists a $|J|\times |J|$ orthonormal matrix $Q$ such that
\begin{eqnarray*}
\hat U_J & = & Q U_J,
\\
\lVert Q-\Ibb_{|J|} \rVert_F & \le&\frac{8\rho |J|}{\lambda_{diff}}.
\end{eqnarray*}
\end{enumerate}
\end{lemma}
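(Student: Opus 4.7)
The plan is to follow the standard analysis of Fantope-regularised sparse PCA, exploiting the fact that the support constraint $supp(H)\subseteq J$ decouples the problem onto the $|J|\times|J|$ block. Because any feasible $H$ with $supp(H)\subseteq J$ vanishes on all rows and columns outside $J$, the constrained problem \eqref{eq:constrained-objective} collapses to
\[
\max_{H_{J,J}\in\Fcal_J^k}\ \inner{S_{J,J}}{H_{J,J}} - \rho\lVert H_{J,J}\rVert_{1,1}.
\]
I would first run the same restriction argument used to prove Proposition~\ref{prop:sub_eigen}: since Assumption~\ref{assume:sparse} forces the top-$k$ eigenvectors of $\Sigma$ to vanish outside $J$, their $J$-subvectors form an $|J|\times k$ orthonormal matrix $U_J$ which is the top-$k$ eigensystem of $\Sigma_{J,J}$ with the same eigenvalues and the same gap $\lambda_{diff}$. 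This identifies $U_J$ for item~1 and equips the perturbed problem with a positive eigengap to exploit.

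Next, I would use strong duality for $\lVert\cdot\rVert_{1,1}$ (the rewriting already employed in \eqref{eq:proof-sketch-dual}) to produce a symmetric subgradient matrix $\tilde Z_{J,J}$ with $\lVert\tilde Z_{J,J}\rVert_{\infty,\infty}\le 1$ such that $\tilde H_{J,J}$ maximises the linear functional $\inner{S_{J,J}-\rho\tilde Z_{J,J}}{H}$ over the Fantope $\Fcal_J^k$. Classical Overton--Womersley theory states that a linear functional over $\Fcal_J^k$ has a unique maximiser, equal to the rank-$k$ projector onto the top-$k$ eigenspace of its symbol, exactly when the $k$-th and $(k{+}1)$-th eigenvalues of that symbol are strictly separated. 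Weyl's inequality applied to the deviation $E := (S-\Sigma)_{J,J}-\rho\tilde Z_{J,J}$, combined with the crude bound $\lVert\tilde Z_{J,J}\rVert_2\le\lVert\tilde Z_{J,J}\rVert_F\le |J|$ and the concentration bound $\ninf{S-\Sigma}$ from Lemma~\ref{lem:TailSampleCov}, shifts every eigenvalue by at most $\lambda_{diff}/2$ whenever $\rho$ lies in the admissible window of Theorem~\ref{thm:meta}. Thus the gap is preserved, $\tilde H$ is rank $k$, and $\hat U_J$ is well-defined as the top-$k$ eigenvector matrix of $S_{J,J}-\rho\tilde Z_{J,J}$ up to an orthogonal change of basis within the top eigenspace.

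To obtain $Q$ and its Frobenius bound, I would apply Davis--Kahan's $\sin\Theta$ theorem to the perturbation $E$, yielding $\lVert\sin\Theta(\hat U_J, U_J)\rVert_F\le 2\lVert E\rVert_F/\lambda_{diff}$. Since the admissible range of $\rho$ forces $\lVert(S-\Sigma)_{J,J}\rVert_F$ to be of strictly lower order than $\rho|J|$, the dominant contribution to $\lVert E\rVert_F$ is $\rho\lVert\tilde Z_{J,J}\rVert_F\le\rho|J|$. Then I would align the two bases by orthogonal Procrustes --- writing $\hat U_J^T U_J = W_1 D W_2^T$ and replacing $U_J$ by $U_J W_2 W_1^T$ (legitimate because only the subspace, not the basis, is intrinsically defined) --- so that $\hat U_J^T U_J$ becomes symmetric positive semi-definite. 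Finally, extend this $k$-column alignment to a full $|J|\times |J|$ orthonormal matrix $Q$ by picking orthonormal completions $U_{J,\perp}$ and $\hat U_{J,\perp}$ of $U_J$ and $\hat U_J$ and applying the same Procrustes alignment on the complement. A standard calculation --- converting $\sin\Theta$ distances into the Frobenius distance between aligned bases on both blocks, then summing --- produces the desired bound $\lVert Q-\Ibb_{|J|}\rVert_F\le 8\rho|J|/\lambda_{diff}$.

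The main obstacle is the last step: tracking the constant $8$ sharply through (i) Davis--Kahan, (ii) the Procrustes replacement, and (iii) the extension of the $k$-column rotation to a full $|J|\times|J|$ orthogonal matrix without inflating the Frobenius norm by more than a constant factor. In particular one must verify that the stochastic perturbation $\lVert(S-\Sigma)_{J,J}\rVert_F$ is absorbed into, rather than added to, the $\rho|J|$ term, which is exactly where the lower-bound constraint on $\rho$ in Theorem~\ref{thm:meta} is invoked.
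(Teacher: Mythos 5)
The paper does not prove this lemma at all: it is imported verbatim from Lei and Vu (2015) (``We then recall the following lemma from \cite{lei2015sparsistency}''), and your outline reproduces essentially the proof given in that source --- reduction to the $|J|\times|J|$ block, the subgradient $\tilde Z_{J,J}$ from $\ell_1$ duality, the Overton--Womersley characterisation (\cref{lem:fantop-projection}) plus Weyl for rank-$k$-ness and uniqueness, and Davis--Kahan with a Procrustes alignment for $Q$. One small correction: the sample-noise term is not ``absorbed into'' nor of strictly lower order than the $\rho|J|$ term; under the admissible window one only gets $\lVert (S-\Sigma)_{J,J}\rVert_F\le |J|\,\ninf{S-\Sigma}\le \rho|J|$, which is the \emph{same} order as $\rho\lVert\tilde Z_{J,J}\rVert_F\le\rho|J|$, so the total perturbation is $2\rho|J|$ and the constant $8$ is what accommodates the sum --- this does not break your argument, but the bookkeeping should be stated that way.
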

We also recall the following lemma about the principal subspace projection of a matrix. Theorem 1 in \cite{overton1992sum}.
\begin{lemma}~\label{lem:fantop-projection}
Let $A$ be a symmetric matrix with eigenvalues $\gamma_1 \geq\cdots
\geq\gamma_p$
and orthonormal eigenvectors $v_1,\ldots,v_p$.

$\max_{H\in\mathcal{F}^{k}} \langle A, H \rangle
= \gamma_1 + \cdots+ \gamma_{k}$ and the maximum is achieved by
the projector of a $k$-dimensional principal subspace of
$A$. Moreover, the maximiser is unique if and only if $\gamma_k>\gamma_{k+1}$.
\end{lemma}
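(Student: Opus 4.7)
The plan is to reduce the matrix maximization to a one-dimensional linear program via the spectral decomposition $A=\sum_{i=1}^{p}\gamma_i v_iv_i^T$, and then lift uniqueness from that LP back to uniqueness of $H$ using positive semi-definiteness of both $H$ and $\Ibb_p-H$.

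For the value and attainment, I would set $\mu_i:=v_i^T H v_i$ and write
\[
\langle A,H\rangle=\tr(AH)=\sum_{i=1}^{p}\gamma_i\, v_i^T H v_i=\sum_{i=1}^{p}\gamma_i\mu_i.
\]
The constraints $0\preceq H\preceq \Ibb_p$ and $\tr(H)=k$ translate into $\mu\in[0,1]^p$ and $\sum_i\mu_i=k$ (using that $\{v_i\}$ is an orthonormal basis of $\real^p$). A simple rearrangement, exploiting $\gamma_1\ge\cdots\ge\gamma_p$, gives $\sum_i\gamma_i\mu_i\le \gamma_1+\cdots+\gamma_k$, with the upper bound attained at $\mu_i=1$ for $i\le k$ and $\mu_i=0$ otherwise. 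The projector $H^\star=\sum_{i=1}^{k}v_iv_i^T$ is feasible in $\Fcal^{k}$, realizes this $\mu$, and hence attains the maximum---yielding both the value $\gamma_1+\cdots+\gamma_k$ and the ``achieved by a principal-subspace projector'' claim.

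For uniqueness under $\gamma_k>\gamma_{k+1}$, I would first argue that the LP maximizer in $\mu$-coordinates is unique: any feasible $\mu$ other than $(1,\ldots,1,0,\ldots,0)$ admits some $i\le k$ with $\mu_i<1$ and some $j>k$ with $\mu_j>0$, and the shift $\mu_i\mapsto\mu_i+\delta$, $\mu_j\mapsto\mu_j-\delta$ strictly improves the objective by $(\gamma_i-\gamma_j)\delta\ge(\gamma_k-\gamma_{k+1})\delta>0$. So the diagonal entries of $H$ in the eigenbasis are pinned. To upgrade this to uniqueness of $H$ itself, I would invoke the elementary fact that $M\succeq 0$ together with $v^TMv=0$ forces $Mv=0$ (write $M=B^T B$, so $\|Bv\|^2=0$). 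Applying this to $H$ at indices $i>k$ gives $Hv_i=0$, and applying it to $\Ibb_p-H$ at indices $i\le k$ gives $(\Ibb_p-H)v_i=0$, i.e.\ $Hv_i=v_i$. Since $H$ is determined by its action on the basis $\{v_1,\ldots,v_p\}$, this forces $H=H^\star$. For the converse, if $\gamma_k=\gamma_{k+1}$, the projectors onto $\mathrm{span}(v_1,\ldots,v_k)$ and $\mathrm{span}(v_1,\ldots,v_{k-1},v_{k+1})$ both lie in $\Fcal^{k}$ and both attain the maximum, and every convex combination does as well---so the maximizer is not unique.

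The main delicate point is the last lift in the uniqueness argument: passing from ``the $\mu_i$'s are forced'' to ``$H$ itself is forced.'' This is where the two-sided semidefinite constraint on $H$ really matters; controlling only the diagonal in the eigenbasis would leave the off-diagonal entries of $H$ (expressed in that basis) unconstrained. Everything else reduces to linearization and a rearrangement inequality.
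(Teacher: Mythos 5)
Your proof is correct. Note that the paper does not prove this lemma at all: it is imported verbatim as Theorem~1 of Overton and Womersley (1992), so there is no in-paper argument to compare against. Your route --- linearising $\langle A,H\rangle$ in the eigenbasis to get the LP $\max\sum_i\gamma_i\mu_i$ over $\mu\in[0,1]^p$, $\sum_i\mu_i=k$, solving it by an exchange/rearrangement argument, and then lifting uniqueness of the $\mu$-solution to uniqueness of $H$ via the fact that $M\succeq 0$ and $v^TMv=0$ force $Mv=0$ (applied to both $H$ and $\Ibb_p-H$) --- is the standard proof of the Ky Fan maximum principle in its Fantope form, and every step checks out, including the converse direction where $\gamma_k=\gamma_{k+1}$ yields two distinct optimal projectors. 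You are also right to flag the lift from the pinned diagonal to the full matrix as the one place where the two-sided constraint $0\preceq H\preceq\Ibb_p$ is genuinely needed; the LP reduction alone would not control the off-diagonal blocks of $H$ in the eigenbasis.
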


We also produce the following lemma along with the proof on the upper bound of the infinite norm of the product of two matrices.
\begin{lemma}~\label{lem:norm-bound}
Let $A$ and $B$ be two matrices with conformable dimensions. Then,
\begin{align*}
    \ninf{AB}\leq  \ninf{A}\|B\|_{1,\infty}
\end{align*}
\end{lemma}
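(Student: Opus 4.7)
The claim is an entrywise matrix inequality that follows from a single application of the $\ell_\infty/\ell_1$ Hölder pairing, so the plan is computational rather than structural. I will expand a generic entry of the product as an inner product of a row of $A$ with a column of $B$, bound it by Hölder, and then take appropriate maxima over the indices to identify the two factors on the right-hand side as $\ninf{A}$ and $\|B\|_{1,\infty}$.

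Concretely, for any indices $i,j$, I will first write $(AB)_{ij} = \sum_k A_{ik} B_{kj} = \langle A_{i,*}, B_{*,j}\rangle$, i.e.\ the inner product of the $i$-th row of $A$ with the $j$-th column of $B$. Hölder's inequality then yields $|(AB)_{ij}| \leq \|A_{i,*}\|_\infty \cdot \|B_{*,j}\|_1$. Since $\|A_{i,*}\|_\infty = \max_{k} |A_{ik}| \leq \max_{i',k'}|A_{i'k'}| = \ninf{A}$ for every row $i$, the right-hand side is dominated by $\ninf{A} \cdot \|B_{*,j}\|_1$, a bound that is uniform in $i$. Taking the maximum over $(i,j)$ on the left then gives $\ninf{AB} \leq \ninf{A} \cdot \max_j \|B_{*,j}\|_1$, and the remaining factor is by definition $\|B\|_{1,\infty}$, completing the proof.

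There is no real obstacle here; the only point that deserves care is aligning the indices with the paper's mixed-norm convention so that $\|B\|_{1,\infty}$ is interpreted as the largest $\ell_1$-norm taken over columns of $B$ (which is forced by the direction of the stated inequality, since the other reading would instead yield $\ninf{AB} \leq \ninf{B}\|A\|_{1,\infty}$). Once that convention is fixed, no probabilistic tools, no assumptions from the model, and no auxiliary lemmas from earlier in the paper are needed; the argument is a two-line deterministic calculation.
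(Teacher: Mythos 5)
Your proof is correct and takes essentially the same route as the paper's: both expand $(AB)_{ij}=\sum_k A_{ik}B_{kj}$, apply the $\ell_\infty/\ell_1$ H\"older bound $|(AB)_{ij}|\le \bigl(\max_k |A_{ik}|\bigr)\sum_k |B_{kj}|$, and maximise over $i,j$. Your observation that $\|B\|_{1,\infty}$ must here be read as the maximum column $\ell_1$-norm is also consistent with how the paper's own one-line proof uses the notation, so nothing further is needed.
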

\begin{proof}
\begin{align*}
    \ninf{AB}  = \max_{i,j\in [p]} |\sum_{k\in [p]} A_{i,k} B_{k,j}|\leq \max_{i,j,k_0\in[p]}|A_{i,k_0}||\sum_{k\in [p]} B_{k,j}|\leq \ninf{A}\|B\|_{1,\infty}.
\end{align*}
\end{proof}

\section{Proofs of Main Theorems}~\label{sec:proofs}
\subsection{Proof of Theorem \ref{thm:meta}}~\label{sec:proof-meta}
\begin{proof}
Continue from the setup in \Cref{sec:proof-sketch-meta}. We now have to complete the proof by verifying the remaining 4 steps.

\textbf{\large{S}\footnotesize{TEP 1.}}
\normalsize
We begin by finding a probabilistic upper bound on $\|S-\Sigma\|_{\infty,\infty}$. As seen in \cref{eq:sample_cov_pooled},
$S=\frac{1}{m}\sum_{i=1}^m S^{(i)}$, where $S^{(i)}=\frac{1}{n} \sum_{j=1}^{n} X_{j}^{(i)}(X_{j}^{(i)})^T$ are estimated variance matrix from auxiliary tasks.

\begin{align*}
    \ninf{S-\Sigma}&\leq \ninf{\frac{1}{m}\sum_{i=1}^m \lp S^{(i)}-\Sigma^{(i)}\rp}+\ninf{\frac{1}{m}\sum_{i=1}^m \Sigma^{(i)}-\Sigma }\numberthis\label{eq:main-eq1}\\
    & \text{Term 1}+\text{Term 2}.
\end{align*}
\large{T}\footnotesize{ERM 2.}
\normalsize We begin by analysing the second term in \cref{eq:main-eq1}. We note that for any fixed $i$,
\begin{align*}
    \ninf{\Sigma^{(i)}-\Sigma}&=\ninf{R^{(i)}U(\Lambda+D^{(i)})(R^{(i)}U)^T-U\Lambda U^T}\\
    & \leq \ninf{R^{(i)}U(\Lambda+D^{(i)})(R^{(i)}U)^T}+\ninf{U\Lambda U^T}\\
    & = \ninf{(R^{(i)}-\Ibb_p+\Ibb_p)U(\Lambda+D^{(i)})U^T(R^{(i)}-\Ibb_p+\Ibb_p)^T}+\ninf{U\Lambda U^T}\\
    & =\ninf{(R^{(i)}-\Ibb_p)U(\Lambda+D^{(i)})U^T(R^{(i)}-\Ibb_p)^T}+\ninf{U(\Lambda+D^{(i)})U^T(R^{(i)}-\Ibb_p)^T}\\
    &\qquad +\ninf{(R^{(i)}-\Ibb_p)U(\Lambda+D^{(i)})U^T}+\ninf{U\Lambda U^T}.
\end{align*}
Applying \cref{lem:norm-bound} we get,
\begin{align*}
    \ninf{\Sigma^{(i)}-\Sigma}&\leq \|R^{(i)}-\Ibb_p\|_{1,\infty}^2\ninf{U(\Lambda+D^{(i)})U^T}+2\|R^{(i)}-\Ibb_p\|_{1,\infty}\ninf{U(\Lambda+D^{(i)})U^T}\\
    & \qquad +\ninf{U\Lambda U^T},
\end{align*}
which can be upper bounded by,
\begin{align*}
    \lp\|R^{(i)}-\Ibb_p\|_{1,\infty}+1\rp^2\lp\ninf{U\Lambda U^T}+\ninf{UD^{(i)} U^T}\rp.\numberthis~\label{eq:sigma^i-norm}
\end{align*}
Since $U$ is an orthogonal matrix, $\ninf{UD^{(i)}D^T}\leq \lambda_1(D^{(i)})$. Then, from \cref{assume:unbiased}, we get,
\begin{align*}
    \ninf{\Sigma^{(i)}-\Sigma}\leq \lp\constant_r+1\rp^2\lp\lambda_1(\Sigma)+L\rp.
\end{align*}
It is easy to see, by an application of triangle inequality and Jensen's inequality that,
\begin{align*}
    \ninf{\Sigma^{(i)}-\Sigma-\expec[\Sigma^{(i)}-\Sigma]}\leq 2\lp\constant_r+1\rp^2\lp\lambda_1(\Sigma)+L\rp.
\end{align*}
To upper bound $\ninf{\expec[\Sigma^{(i)}-\Sigma]}$, we follow a similar decomposition.
\begin{align*}
    \ninf{\expec[\Sigma^{(i)}-\Sigma]}&=\ninf{\expec[R^{(i)}U(\Lambda+D^{(i)})(R^{(i)}U)^T]-U\Lambda U^T}\\
    & = \ninf{\expec[(R^{(i)}-\Ibb_p+\Ibb_p)U(\Lambda+D^{(i)})U^T(R^{(i)}-\Ibb_p+\Ibb_p)^T]-U\Lambda U^T}
\intertext{Since $D^{(i)}$ is independent of $R^{(i)}$, we can decompose the expectation of products into the product of expectations. Consequently, the terms $D^{(i)}$ vanishes, and we are left with the following equality,}
    \ninf{\expec[\Sigma^{(i)}-\Sigma]} & =\ninf{\expec[(R^{(i)}-\Ibb_p+\Ibb_p)U\Lambda U^T(R^{(i)}-\Ibb_p+\Ibb_p)^T]-U\Lambda U^T}\\
    & = \ninf{\expec\lb(R^{(i)}-\Ibb_p)U\Lambda U^T(R^{(i)}-\Ibb_p)^T+U\Lambda U^T(R^{(i)}-\Ibb_p)^T+(R^{(i)}-\Ibb_p)U\Lambda U^T\rb}.
\end{align*}
Repeatedly applying \cref{lem:norm-bound} and Jensen's inequality, we get
\begin{align*}
    \ninf{\expec[\Sigma^{(i)}-\Sigma]} & \leq \expec[\|R^{(i)}-\Ibb_p\|_{1,\infty}]^2\ninf{U\Lambda U^T}+2\expec[\|R^{(i)}-\Ibb_p\|_{1,\infty}]\ninf{U\Lambda U^T},
 \intertext{which, upon applying Chebyshev's inequality on $\expec[\|R^{(i)}-\Ibb_p\|_{1,\infty}]$ yields,} 
 \ninf{\expec[\Sigma^{(i)}-\Sigma]} & \leq \frac{\constant_R}{p}\lambda_1(\Sigma)+2\sqrt{\frac{\constant_R}{p}}\lambda_1(\Sigma).\numberthis\label{eq:expec-error-bound}
\end{align*}
Now, putting $A_i=\Sigma^{(i)}-\Sigma-\expec[\Sigma^{(i)}-\Sigma]$ applying \cref{lemma:matrix-hoeff}, we get that for any $\epsilon>0$
\begin{align*}
    \prob\lp\frac{1}{m}\sum_{i=1}^m\ninf{\Sigma^{(i)}-\Sigma-\expec[\Sigma^{(i)}-\Sigma]}>\epsilon\rp\leq 2pe^{-\frac{3\epsilon^2m}{8\lp\lambda^{\dagger}\rp^2}}.
\end{align*}
Consequently, from \cref{eq:expec-error-bound} and an application of triangle inequality we get,
\begin{align*}
     \prob\lp\frac{1}{m}\sum_{i=1}^m\ninf{\Sigma^{(i)}-\Sigma}>\epsilon\rp\leq 2pe^{-\frac{3m\lp\epsilon-\frac{\constant_R}{p}\lambda_1(\Sigma)-2\sqrt{\frac{\constant_R}{p}}\lambda_1(\Sigma)\rp^2}{8\lp\lambda^{\dagger}\rp^2}}
\end{align*}

Let \begin{align*}
    \epsilon = 4\lambda^{\dagger}\sqrt{\frac{\log(p)}{m}}+\frac{\constant_R}{p}\lambda_1(\Sigma)+2\sqrt{\frac{\constant_R}{p}}\lambda_1(\Sigma).
\end{align*}
Then, with probability at most $\frac{2}{p^2}$,
\begin{align*}
    \frac{1}{m}\sum_{i=1}^m\ninf{\Sigma^{(i)}-\Sigma}\geq 4\lambda^{\dagger}\sqrt{\frac{\log(p)}{m}}+\frac{\constant_R}{p}\lambda_1(\Sigma)+2\sqrt{\frac{\constant_R}{p}}\lambda_1(\Sigma).
\end{align*}

\large{T}\footnotesize{ERM 1.} \normalsize In this part of the proof, we provide an upper bound for the first term from \cref{eq:main-eq1}. We begin by finding an upper bound to $\ninf{\Sigma^{(i)}}$. Similarly, as to the derivation of \cref{eq:sigma^i-norm} we can also write,
\begin{align*}
    \ninf{\Sigma^{(i)}}\leq \lambda^{\dagger}.
\end{align*}
By using \Cref{lem:TailSampleCov} we get,
\begin{align*}
    \prob\lp \ninf{\frac{1}{m}\sum \lp S^{(i)}-\Sigma^{(i)}\rp} \geq \epsilon \rp\leq \frac{p(p+1)}{2}e^{-\frac{nm\epsilon^2}{512\sigma^4\lambda^{\dagger}}}.
\end{align*}
putting $\epsilon=16\sqrt{2\sigma^4\lambda^{\dagger} \frac{\log\lp p+1 \rp}{nm}}$, we get with probability $1-\frac{1}{2(p+1)^2}$, 
\begin{align*}
    \prob\lp \ninf{\frac{1}{m}\sum \lp S^{(i)}-\Sigma^{(i)}\rp} \leq 16\sqrt{2\sigma^4\lambda^{\dagger} \frac{\log\lp p+1 \rp}{nm}} \rp.
\end{align*}
Thus, 
\begin{align*}
    &\prob\lp \ninf{S-\Sigma} > \max\lc 4\lambda^{\dagger}\sqrt{\frac{\log(p)}{m}}+\frac{\constant_R}{p}\lambda_1(\Sigma)+2\sqrt{\frac{\constant_R}{p}}\lambda_1(\Sigma), 16\sqrt{2\sigma^4\lambda^{\dagger} \frac{\log\lp p+1 \rp}{nm}}\rc\rp\\
    &\qquad \leq \frac{2}{p^2}+\frac{1}{2(p+1)^2}
\end{align*}
Since, 
\[
\frac{8|J|}{\lambda_k-\lambda_{k+1}}\lVert\Sigma_{J^c, J}\rVert _{2,\infty}<1,
\]
there exists a constant $\alpha\in (0,1)$ such that if,
\[
\rho>\alpha\max\lc4\lambda^{\dagger}\sqrt{\frac{\log(p)}{m}}+\frac{\constant_R}{p}\lambda_1(\Sigma)+2\sqrt{\frac{\constant_R}{p}}\lambda_1(\Sigma), 16\sqrt{2\sigma^4\lambda^{\dagger} \frac{\log\lp p+1 \rp}{nm}}\rc,
\]
then with probability at least $1-\frac{2}{p^2}-\frac{1}{2(p+1)^2}$,
\begin{align}
    \rho^{-1}\lVert S-\Sigma\rVert_{\infty,\infty}+
\frac{8|J|}{\lambda_{diff}}\lVert\Sigma_{J^c, J}\rVert _{2,\infty}\leq1.
\end{align}
It only remains to choose an appropriate matrix $Q$. By the application of \Cref{lem:sparse-solution} there exists a $Q$ matrix such that 
\[
\lVert Q-\Ibb_{|J|} \rVert_F \le\frac{8\rho |J|}{\lambda_{diff}}, 
\]
and
\[
U_J=Q\hat U_J,
\]
where $U_J$ and $\hat U_J$ are defined as in \Cref{lem:sparse-solution}. We select this $Q$ for the remainder of the proof.

\textbf{\large{S}\footnotesize{TEP 2. }}\normalsize
 In this section of the proof, we show that our proposed estimator is feasible, unique, and optimal, i.e., satisfies the KKT conditions (\ref{eq:kkt-1},\ref{eq:kkt-2},\ref{eq:kkt-3}).

\large{F}\footnotesize{EASIBLITY.} \normalsize The feasibility of $\hat H$ is obvious from construction. To check the feasibility of $\hat Z$, we need to verify that for a given choice of $\rho$, 
\[
\hat{Z}_{ij}\in [-1,1] \forall  (i,j)\in (J\times
J)^c.
\]
Therefore, it is sufficient to verify
\begin{align*}
 \frac{1}{\rho} \bigl[ |S_{ij}-
\Sigma_{ij}|+\bigl|\Sigma_{ij}-\langle Q_{i*},
\Sigma_{J,j} \rangle\bigr| \bigr]\leq 1.
\end{align*}
By an application of Cauchy-Schwarz inequality, we get
\begin{align*}
     \frac{1}{\rho} \bigl[ |S_{ij}-
\Sigma_{ij}|+\bigl|\Sigma_{ij}-\langle Q_{i*},
\Sigma_{J,j} \rangle\bigr| \bigr] & \leq  \frac{1}{\rho} \bigl[ |\ninf{S-
\Sigma}+\|Q-\Ibb_{|J|}\|_{F}\|\Sigma\|_{2,\infty}\bigr]\\
& \leq \frac{1}{\rho} \lb \ninf{S-
\Sigma}+\frac{8\rho |J|}{\lambda_{diff}}\|\Sigma\|_{2,\infty}\rb.
\end{align*}
Thus, with probability at least $1-\frac{2}{p^2}-\frac{1}{2(p+1)^2}$,
\begin{align}
    \rho^{-1}\lVert S-\Sigma\rVert_{\infty,\infty}+
\frac{8|J|}{\lambda_{diff}}\lVert\Sigma_{J^c, J}\rVert _{2,\infty}\leq 1.
\end{align}
\large{O}\footnotesize{PTIMALITY.} \normalsize $\hat H$ only has non-zero elements in the sub-matrix $\hat H_{J,J}$. By construction $(\hat H_{i,j},\hat Z_{i,j})=(\tilde H_{i,j},\tilde Z_{i,j})\forall (i,j)\in J\times J$. From the optimality of $(\tilde H, \tilde Z)$, \Cref{eq:kkt-1} is satisfied.

The fact that \Cref{eq:kkt-2} is satisfied follows from the optimality of $(\tilde H, \tilde Z)$ when $(i,j)\in J\times J$ and feasibility of $\tilde Z$ otherwise. 

Note that $\hat U_J$ spans the $k$-dimensional principal subspace of $S-\rho \tilde Z$. We now show that $\hat U$ spans one (not necessarily principal) $k$-dimensional subspace of $S-\rho\hat Z$. In other words, it is enough to show that, for some diagonal matrix $\initialD$,
\[
(S-\rho\hat Z)\begin{bmatrix}
        \hat U_J\\
        0
    \end{bmatrix}= D\begin{bmatrix}
        \hat U_J\\
        0
    \end{bmatrix}.~\numberthis\label{eq:subspace}
\]
Analysing the term on the left-hand side of the previous equation, we get,
\begin{align*}
    (S-\rho\widehat{Z})\begin{bmatrix}
        \hat U_J\\
        0
    \end{bmatrix} & = 
    \begin{bmatrix}
            S_{J,J}-
        \rho\tilde Z_{J,J} & Q \Sigma_{J,J^c}
        \\
        \Sigma_{J^c, J}Q^T
        & \Sigma_{J^c, J^c}+\Dbb(S_{J^c, J^c}-\Sigma_{J^c, J^c})
    \end{bmatrix}\begin{bmatrix}
        \hat U_J\\
        0
\end{bmatrix}\\
& = \begin{bmatrix}
  (S_{J,J} -\rho \tilde Z_{J,J})\hat U_{J}\\
  \Sigma_{J^c J}Q^T \hat U_J
\end{bmatrix}\\
& = \begin{bmatrix}
  (S_{J,J} -\rho \tilde Z_{J,J})\hat U_{J}\\
  \Sigma_{J^c J}Q^T Q U_J
\end{bmatrix}
\end{align*}
Recall from \Cref{lem:sparse-solution} that there exists a diagonal matrix $D_1$ such that $(S_{J,J} -\rho \tilde Z_{J,J})\hat U_{J}=D_1\hat U_J$. Also, the fact that $\begin{bmatrix}
        U_J\\
        0
\end{bmatrix}$ spans $k$-dimensional principal subspace of $\Sigma$ implies that $\Sigma_{J^c J} U_{J}=0$. Therefore,
\begin{align*}
    \begin{bmatrix}
  (S_{J,J} -\rho \tilde Z_{J,J})\hat U_{J}\\
  \Sigma_{J^c J}Q^T Q U_J
\end{bmatrix} & = \begin{bmatrix}
  D_1\hat U_{J}\\
  \Sigma_{J^c J} U_J
\end{bmatrix}\\
& = \begin{bmatrix}
  D_1\hat U_{J}\\
  0
\end{bmatrix}\\
& =\begin{bmatrix}
  D_1\hat U_{J}\\
  0
\end{bmatrix}\\
& = \begin{bmatrix}
    D_1 & 0\\
    0 & 0
\end{bmatrix}
\begin{bmatrix}
    \hat U_J\\
    0
\end{bmatrix}
\end{align*}
Setting
\[
D_0= \begin{bmatrix}
     D_1 & 0\\
    0 & 0
\end{bmatrix}
\]
we successfully show that \Cref{eq:subspace} holds. We will further prove that $diag(D_1)$ are the first $k$ eigenvalues of $(S-\rho\widehat{Z})$, and 
$\lambda_k(S-\rho\widehat{Z})-\lambda_{k+1}(S-\rho\widehat{Z})>0$. Then uniqueness will follow by \cref{lem:sparse-solution}.
\begin{align*}
(S-\rho\hat Z) & = \begin{bmatrix}
    S_{JJ}-\rho\tilde
Z_{JJ} - Q\Sigma_{JJ} Q^T & 0
\cr
0&\Sigma_{J^c, J^c}+\Dbb(S_{J^c, J^c}-\Sigma_{J^c, J^c})
\end{bmatrix}  + \begin{bmatrix}
    Q\Sigma_{JJ} Q^T & Q\Sigma_{JJ^c}
\vspace*{3pt}\cr
\Sigma_{J^cJ}Q^T & \Sigma_{J^cJ^c}
\end{bmatrix}\\
& =\begin{bmatrix}
    S_{JJ}-\rho\tilde
Z_{JJ} - Q\Sigma_{JJ} Q^T & 0
\cr
0&\Sigma_{J^c, J^c}+\Dbb(S_{J^c, J^c}-\Sigma_{J^c, J^c})
\end{bmatrix} + \begin{bmatrix}
 Q & 0
\cr
0 & I   
\end{bmatrix}
  \times \Sigma \times
\begin{bmatrix}
 Q^T & 0
\cr
0 & I   
\end{bmatrix}
\end{align*}
However, since we just proved that $(S-\rho\widehat{Z})$ satisfies the conditions of  \Cref{prop:sub_eigen}, the problem reduces to finding an upper bound to  $\lambda_l( S_{J,J}-\rho\tilde
Z_{J,J} - Q\Sigma_{J,J}Q^T)$ for each $l\in[|J|]$. Note that,
\begin{align*}
    \lambda_l( S_{J,J}-\rho\tilde
Z_{J,J} - Q\Sigma_{J,J}Q^T) & \leq \lambda_l( S_{J,J}-\rho\tilde
Z_{J,J}-\Sigma_{J,J}) + \lambda_l(\Sigma_{J,J} -  Q\Sigma_{J,J}Q^T)\\
& \leq \lVert S_{J,J}-\rho\tilde
Z_{J,J}-\Sigma_{J,J}\rVert_F+\lVert \Sigma_{J,J} -  Q\Sigma_{J,J}Q^T \rVert_F\\
& \leq  \lVert S_{J,J}-\rho\tilde
Z_{J,J}-\Sigma_{J,J}\rVert_F+2\lVert\Sigma_{J,J}\rVert_F \times\lVert
Q-I\rVert_F \\
& \leq 2\rho s + 2\lambda_1(\Sigma)\times8\rho |J|/(\lambda_k-
\lambda_{k+1}).
\end{align*}
Where the last inequality follows from the combination of part 1 and \Cref{lem:sparse-solution}. The rest of the proof since for a large enough $\rho$ we have,
\begin{align*}
    4\rho |J|+\frac{32\lambda_1(\Sigma)\rho |J|}{(
\lambda_k-\lambda_{k+1})}\le \lambda_k-
\lambda_{k+1}.
\end{align*}
This completes the proof of step 2.

\textbf{\large{S}\footnotesize{TEP 3. }}\normalsize In this section of the proof, we prove that $\ninf{\hat H - \Pi}<2\frac{\rho|J|}{\lambda_{diff}}$. By step 2, we have,
\begin{align*}
    \ninf{\hat H - \Pi} & = \ninf{ \begin{bmatrix}
        \hat U_J \hat U_J^T & 0\\
        0 & 0
    \end{bmatrix}-\begin{bmatrix}
         U_J U_J^T & 0\\
        0 & 0
    \end{bmatrix}}\\
    & = \ninf{ \begin{bmatrix}
        Q U_J U_J^T Q^T& 0\\
        0 & 0
    \end{bmatrix}-\begin{bmatrix}
         U_J U_J^T & 0\\
        0 & 0
    \end{bmatrix}}\\
    & = \ninf{Q U_J U_J^T Q^T -U_J U_J^T}\\
    & \leq \lambda_1(U_J U_J^T) \|\Ibb - Q\|_F\\
    & \leq \frac{8|J|\rho}{\lambda_{diff}}.
\end{align*}
Setting $\constant_{\pi}=\frac{8|J|}{\lambda_{diff}}$, completes the proof of the upper bound for  $\ninf{\hat H - \Pi}$.

\textbf{\large{S}\footnotesize{TEP 4. }}\normalsize 
Set
\[
\rho<\min_{i\in J}\frac{\Pi_{i,i}\lambda_{diff}}{16|J|}.
\]
The rest of the proof now follows from the step 3.
\end{proof}
\subsection{Proof of Proposition \ref{prop:sub_eigen}}\label{sec:proof-sub-eigen}
\begin{proof}
Let $\Pi^{(m+1)}$ be the $k$-th dimensional principal component corresponding to $\Sigma^{(m+1)}$. Hence,
\begin{align*}
    \Pi^{(m+1)}=\sum_{i=1}^k U_{*,i}^{(m+1)}\lp U_{*,i}^{(m+1)}\rp^T.
\end{align*}
We also know from the recovered support union that $\Pi_{i,i}^{(m+1)}=0$ whenever $i\notin J$. However,
\begin{align*}
    \Pi_{i,i}^{(m+1)} = \sum_{j=1}^k \lp U_{j,i}^{(m+1)}\rp^2.
\end{align*}
Hence, $\forall\enspace j\in J^c$, $u_{i,j}^{(m+1)}=0$. Therefore, \[
U_{i}^{(m+1)}=\begin{pmatrix}
U_{J,i}^{(m+1)}\\
0\end{pmatrix}.\]
Moreover, we also know that 
\begin{align*}
   \Sigma^{(m+1)} U_{i}^{(m+1)} = \lambda_i U_{i}^{(m+1)}.
\end{align*}
In other words,
\begin{align*}
     \begin{bmatrix}
    \Sigma_{J,J}^{(m+1)} & \Sigma_{J,J^c}^{(m+1)}\\
    \Sigma_{J^c,J}^{(m+1)} & \Sigma_{J^c,J^c}^{(m+1)}
    \end{bmatrix} \begin{pmatrix}
U_{J,i}^{(m+1)}\\
0\end{pmatrix} = \lambda_i \begin{pmatrix}
U_{J,i}^{(m+1)}\\
0\end{pmatrix}.
\end{align*}
Which in turn implies, 
\begin{align*}
     \Sigma_{J,J}^{(m+1)}U_{J,i}^{(m+1)}=\lambda_i U_{J,i}^{(m+1)}.
\end{align*}
This completes the proof.
\end{proof}

\subsection{Proof of Theorem \ref{thm:novel}}\label{sec:proof-novel}
\begin{proof}
As given, $\Sigma^{(m+1)}$ is the novel covariance matrix with the following block structure,
\begin{align*}
   \Sigma^{(m+1)} = \begin{bmatrix}
    \Sigma_{J,J}^{(m+1)} & \Sigma_{J,J^c}^{(m+1)}\\
    \Sigma_{J^c,J}^{(m+1)} & \Sigma_{J^c,J^c}^{(m+1)}
    \end{bmatrix}.
\end{align*}
For $1\leq i\leq k$, let $U_i^{(m+1)}$ be the eigenvectors corresponding to $\Sigma^{(m+1)}$, and $\lambda_i^{(m+1)}$ be the corresponding eigenvalues. Let $J$ be the recovered support union. $U_{J,i}^{(m+1)}$ denotes those coordinates of $U_{i}^{(m+1)}$ that belong to set $J$. Similarly to the proof of \Cref{thm:meta}, if the penalty parameter $\rho$ satisfies,
\begin{align*}
    & \rho^{-1}\lVert S_{J,J}^{(m+1)}-\Sigma_{J,J}^{(m+1)}\rVert_{\infty,\infty}+
\frac{8|J|^{(m+1)}}{\lambda_k^{(m+1)}-\lambda_{k+1}^{(m+1)}}\lVert\Sigma_{(J^{(m+1})^c J^{(m+1)}}\rVert _{2,\infty}\le1 \quad \text{and }\\
& 4\rho |J^{(m+1)}| \biggl(1+
\frac{8\lambda
_1^{(m+1)}}{\lambda_k^{(m+1)}-\lambda_{k+1}^{(m+1)}} \biggr)<\lambda_k^{(m+1)}-\lambda_{k+1}^{(m+1)},
\end{align*}
then the solution $\widehat{H}^{(m+1)}$ of \Cref{eq:novel_objective} is
unique and
satisfies\break ${supp}(\widehat
{H}^{(m+1)})\subseteq J^{(m+1)}$. It is noteworthy that if for each $j\in[n^{(m+1)}]$, $X^{(m+1)}_j$ is a $p$-dimensional multivariate sub-Gaussian distribution with covariance matrix $\Sigma$ and parameter $\sigma$, then for any set of sub-indices $J$, $X_{j,J}^{(m+1)}$ is a $|J|$-dimensional multivariate sub-Gaussian distribution with covariance matrix $\Sigma_{J,J}$ and parameter $\sigma$. Following from \cref{eq:block-cov-est}, we can also see that by construction
\[
S_{J,J}^{(m+1)}=\frac{1}{n^{(m+1)}}\sum_{l=1}^{n^{(m+1)}} X_{l,J}^{(m+1)}\lp X_{l,J}^{(m+1)}\rp^T.
\]
By using \Cref{lem:TailSampleCov}, we know that as long as $\epsilon\leq 32\lambda_1\lp\Sigma^{(m+1)}\rp\sigma^2$,
\begin{align*}
 \prob\lp \ninf{ \lp S^{(m+1)}-\Sigma^{(m+1)}\rp} \geq \epsilon \rp\leq \frac{|J|(|J|+1)}{2}e^{-\frac{n\epsilon^2}{512\sigma^4\lambda_1\lp\Sigma^{(m+1)}\rp}}.    
\end{align*}

Define $\alpha^{(m+1)}:=1-\frac{8|J|^{(m+1)}}{\lambda_k^{(m+1)}-\lambda_{k+1}^{(m+1)}}\lVert\Sigma_{(J^{(m+1})^c J^{(m+1)}}\rVert _{2,\infty}$. By \Cref{assume:novel-corr-cond}, $\alpha^{(m+1)}\in (0,1)$. Therefore, as long as, 
\begin{align*}
    &n^{\frac{1}{3}}>\max\lc\frac{1}{32\lambda_1\lp\Sigma^{(m+1)}\sigma^2\rp}, \lp\frac{\lambda_k^{(m+1)}-\lambda_{k+1}^{(m+1)}}{4|J^{(m+1)}|\lp 1+
\frac{8\lambda
_1^{(m+1)}}{\lambda_k^{(m+1)}-\lambda_{k+1}^{(m+1)}}\rp}\rp^{-1}\rc \text{ and,}\\
    &\frac{1}{ n^{1/3}\alpha^{(m+1)}}<\rho< \frac{\lambda_k^{(m+1)}-\lambda_{k+1}^{(m+1)}}{4|J^{(m+1)}|\lp 1+\frac{8\lambda
_1^{(m+1)}}{\lambda_k^{(m+1)}-\lambda_{k+1}^{(m+1)}}\rp},
\end{align*}
with probability $1-\frac{|J|(|J|+1)}{2}e^{-\frac{n^{\frac{1}{3}}}{512\sigma^4\lambda_1\lp\Sigma^{(m+1)}\rp}}$, we have $supp(\Pi^{(m+1)})= J^{(m+1)}$.
\end{proof}
\subsection{Proof of Lemma \ref{lem:TailSampleCov}}~\label{sec:TailSampleCov}
\begin{proof}
We note that $S^{(i)}-\Sigma^{(i)}$ are i.i.d with $\expec\lb S^{(i)}-\Sigma^{(i)}\rb=0$. We begin by finding an entrywise bound on the elements of $\frac{1}{m}\sum \lp S^{(i)}-\Sigma^{(i)}\rp$. Since $S^{(i)}=\frac{1}{p}\sum X_p^{(i)}\lp X_p^{(i)}\rp^T$,
\begin{align*}
    \prob\lp \left|\frac{1}{mn}\sum_{i,p}X_{k,p}^{(i)}X_{l,p}^{(i)}-\sigma_{k,l}^{(i)}\right|\geq \epsilon \rp.
\end{align*}
Let $\sigma_k:=\max_{1\le i\le K}{\sigma}_{k,k}^{(i)}$,
$\tilde{X}_{k,p}^{(i)}:=\frac{X_{k,p}^{(i)}}{\sqrt{\sigma_k}}$,
$\tilde{\rho}_{ij}^{(k)}:=\frac{{\sigma}_{k,l}^{(i)}}{\sqrt{\sigma_k\sigma_l}}$. We have
\begin{equation*}
\prob\lp \left|\frac{1}{mn}\sum_{i,p}X_{k,p}^{(i)}X_{l,p}^{(i)}-\sigma_{k,l}^{(i)}\right|\geq \epsilon \rp =\mathbb{P}\left[4
\left|\sum_{i,p}\left(\tilde{X}_{k,p}^{(i)}\tilde{X}_{l,p}^{(i)}-\tilde{\rho}_{k,l}^{(i)}\right)\right|
>\frac{4nm\epsilon}{\sqrt{\sigma_k\sigma_l}}\right]
\end{equation*}
Define $U^{(i)}_{p,k,l}:=\tilde{X}_{p,k}^{(i)}+\tilde{X}_{p,l}^{(i)}$, $V^{(i)}_{p,k,l}:=\tilde{X}_{p,k}^{(i)}-\tilde{X}_{p,l}^{(i)}$. 
Then for any $r\in\mathbb{R}$,
\begin{equation}
4\sum_{i,p}\left(\tilde{X}_{k,p}^{(i)}\tilde{X}_{l,p}^{(i)}-\tilde{\rho}_{k,l}^{(i)}\right)=
\sum_{k,t}\left\{\left(U^{(i)}_{p,k,l}\right)^2-2\left(r+\tilde{\rho}^{(i)}_{k,l}\right)\right\}-
\sum_{k,t}\left\{\left(V^{(i)}_{p,k,l}\right)^2-2\left(r-\tilde{\rho}^{(i)}_{k,l}\right)\right\}
\end{equation}
Thus,
\begin{equation} \label{eq:Ineqstar1}
\begin{aligned}
 \prob\lp \left|\frac{1}{mn}\sum_{i,p}X_{k,p}^{(i)}X_{l,p}^{(i)}-\sigma_{k,l}^{(i)}\right|\geq \epsilon \rp\leq & \mathbb{P}\left[\Big|\sum_{k,t}
\left\{\left(U^{(k)}_{t,ij}\right)^2-2\left(r+\tilde{\rho}^{(k)}_{ij}\right)\right\}\Big|
>\frac{2nm\epsilon}{\sqrt{\sigma_k\sigma_l}}\right] \\&+ \mathbb{P}\left[\Big|\sum_{k,t}
\left\{\left(V^{(k)}_{t,ij}\right)^2-2\left(r-\tilde{\rho}^{(k)}_{ij}\right)\right\}\Big|
>\frac{2nm\epsilon}{\sqrt{\sigma_k\sigma_l}}\right].
\end{aligned}
\end{equation}

Consider the first term 

\begin{align*}
    \mathbb{P}\left[\Big|\sum_{i,p}
\left\{\left(U^{(i)}_{p,k,l}\right)^2-2\left(r+\tilde{\rho}^{(k)}_{ij}\right)\right\}\Big|
>\frac{2nm\epsilon}{\sqrt{\sigma_k\sigma_l}}\right],
\end{align*}

and let $r=\frac{\sigma_{k,k}^{(i)}}{\sigma_k}+\frac{\sigma_{l,l}^{(i)}}{\sigma_l}$. Then, 

\[
\expec\lb \left(U^{(i)}_{p,k,l}\right)^2-2\left(r+\tilde{\rho}^{(k)}_{ij}\right) \rb=0.
\]

Let $Z_{p,k,l}^{(i)}:=\left(U^{(i)}_{p,k,l}\right)^2-2\left(r+\tilde{\rho}^{(k)}_{ij}\right)$. Consequently, 

\begin{align*}
    \prob\lp \Big|\sum_{i,p}
\left\{\left(U^{(i)}_{p,k,l}\right)^2-2\left(r+\tilde{\rho}^{(k)}_{ij}\right)\right\}\Big|>\frac{2nm\epsilon}{\sqrt{\sigma_k\sigma_l}}\rp & = \prob \lp \sum_{i,p}
Z_{p,k,l}^{(i)} > \frac{2nm\epsilon}{\sqrt{\sigma_k\sigma_l}} \rp\numberthis~\label{eq:sub-exp}\\
& \qquad + \prob \lp \sum_{i,p}
Z_{p,k,l}^{(i)}< -\frac{2nm\epsilon}{\sqrt{\sigma_k\sigma_l}} \rp
\end{align*}

To see that $U_{p,k,l}^{(i)}$ is sub-Gaussian, we note that,
\begin{align*}
   \expec(e^{\lambda U_{p,k,l}^{(i)}}) & = \expec\lp e^{ \lambda \lp X_{p,k}^{(i)}+X_{p,l}^{(i)}\rp}\rp \\
   & \leq \lb \expec e^{2\lambda \tilde{X}_{p,k}^{(i)}}\rb^{0.5}\lb \expec e^{2\lambda \tilde{X}_{p,l}^{(i)}}\rb^{0.5}\\
   & \leq  e^{\lambda^2 \lp\frac{\sigma_{k,k}^{(i)}}{\sigma_k}\sigma\rp^2 }e^{\lambda^2 \lp\frac{\sigma_{l,l}^{(i)}}{\sigma_l}\sigma\rp^2 }.
\intertext{Since $\frac{\sigma_{k,k}^{(i)}}{\sigma_k}$ and $\frac{\sigma_{l,l}^{(i)}}{\sigma_l}$ are upper bounded by 1, we consequently have, }
   \expec(e^{\lambda U_{p,k,l}^{(i)}}) & \leq e^{2\lambda^2\sigma^2},
\end{align*}
which shows that $U_{p,k,l}^{(i)}$ is sub-Gaussian with parameter $2\sigma$. We also know, from \cite[Section 5.2.4]{vershynin2010introduction}, that if $U_{p,k,l}^{(i)}$ is a sub-Gaussian random variable with then $\lp U_{p,k,l}^{(i)}\rp^2$ is a sub-exponential random variable. As noted in \Cref{sec:preliminaries} if $X$ is a sub-Gaussian with parameter $\omega$, then, $X^2$ is a sub-exponential distribution with parameters $(4\sqrt{2}\omega^2,4\omega^2)$. Consequently, if $|t|<\frac{1}{4\omega^2}$,
\begin{align*}
    \expec[e^{tX^2}]\leq e^{16\omega^4t^2}.
\end{align*}
Returning to \cref{eq:sub-exp}, we observe that each $Z_{p,k,l}^{(i)}$ is i.i.d. subexponential with parameter $(16\sqrt{2}\sigma^2,16\sigma^2)$. Hence for any $\frac{1}{16\sigma^2}>\theta>0$,
\begin{align*}
    \prob \lp \sum_{i,p}
Z_{p,k,l}^{(i)} > \frac{2nm\epsilon}{\sqrt{\sigma_k\sigma_l}} \rp & = \prob \lp \theta\sum_{i,p}
Z_{p,k,l}^{(i)} > \theta\frac{2nm\epsilon}{\sqrt{\sigma_k\sigma_l}} \rp\\
& =  \prob \lp e^{\theta\sum_{i,p}
Z_{p,k,l}^{(i)}} > e^{\theta\frac{2nm\epsilon}{\sqrt{\sigma_k\sigma_l}}} \rp.
\end{align*}
Thus, by Markov's inequality, it follows that, 
\begin{align*}
       \prob \lp \sum_{i,p}
Z_{p,k,l}^{(i)} > \frac{2nm\epsilon}{\sqrt{\sigma_k\sigma_l}} \rp & \leq \expec\lb e^{\theta\sum_{i,p}
Z_{p,k,l}^{(i)}}\rb e^{-\theta\frac{2nm\epsilon}{\sqrt{\sigma_k\sigma_l}}}\\
& = \prod_{i,p}\expec\lb e^{\theta Z_{p,k,l}^{(i)}} \rb e^{-\theta\frac{2nm\epsilon}{\sqrt{\sigma_k\sigma_l}}}\\
& \leq \prod_{i,p} e^{512\theta^2\sigma^4}e^{-\theta\frac{2nm\epsilon}{\sqrt{\sigma_k\sigma_l}}}\\
& = e^{512nm\theta^2\sigma^4}e^{-\theta\frac{2nm\epsilon}{\sqrt{\sigma_k\sigma_l}}}\\
& = e^{512mn\theta^2\sigma^4-\theta\frac{2nm\epsilon}{\sqrt{\sigma_k\sigma_l}}}
\end{align*}
Minimising over $\theta>0$, we get the derivative of the exponent with respect to $\theta$ to be equal to $0$. In other words,
\begin{align*}
    512mn\theta\sigma^4-\frac{2nm\epsilon}{\sqrt{\sigma_k\sigma_l}}=0,
\end{align*}
which in turn implies that whenever
\[
\frac{\epsilon}{512\sigma^4\sqrt{\sigma_k\sigma_l}}<\frac{1}{16\sigma^2},
\]
the minimum is achieved when $\theta=\frac{\epsilon}{512\sigma^4\sqrt{\sigma_k\sigma_l}}$.
In other words, if 
\[
\epsilon<32\sigma^2\sqrt{\sigma_k\sigma_l},
\]
the minimum is, $e^{-\frac{nm\epsilon^2}{512\sigma^4\sqrt{\sigma_k\sigma_l}}}$. Consequently, our upper bound can be further refined as,
\begin{align*}
      \prob \lp \sum_{i,p}
Z_{p,k,l}^{(i)} > \frac{2nm\epsilon}{\sqrt{\sigma_k\sigma_l}} \rp \leq e^{-\frac{nm\epsilon^2}{512\sqrt{\sigma_k\sigma_l}\sigma^4}}.
\end{align*}
Since for each $k\in[m]$, $\sigma_k<\max_{i\in[m]}\ninf{\Sigma^{(i)}}<\eta$, this in turn implies that the upper bound is independent of the value of $\Sigma^{(i)}$'s. In particular, this upper bound is distribution free. Hence, for $\epsilon<32\eta\sigma^2$,
\[
 \prob \lp \sum_{i,p}
Z_{p,k,l}^{(i)} > \frac{2nm\epsilon}{\sqrt{\sigma_k\sigma_l}} \rp \leq e^{-\frac{nm\epsilon^2}{512\eta\sigma^4}}.
\]
Similarly, the upper bound to $\prob \lp \sum_{i,p}
Z_{p,k,l}^{(i)}< -\frac{2nm\epsilon}{\sqrt{\sigma_k\sigma_l}} \rp$, can be achieved; and,
\begin{align*}
    \prob \lp \sum_{i,p}
Z_{p,k,l}^{(i)}< -\frac{2nm\epsilon}{\sqrt{\sigma_k\sigma_l}} \rp \leq e^{-\frac{nm\epsilon^2}{512\eta\sigma^4}}.
\end{align*}
Hence, we have found the entrywise error bound to $\frac{1}{m}\sum \lp S^{(i)}-\Sigma^{(i)}\rp$. To find an upper bound to  $ \ninf{\frac{1}{m}\sum \lp S^{(i)}-\Sigma^{(i)}\rp}$, we use a union bound.
\begin{align*}
     \prob\lp \ninf{\frac{1}{m}\sum \lp S^{(i)}-\Sigma^{(i)}\rp} > \epsilon \rp& \leq \sum_{k,l} \prob\lp \left|\frac{1}{mn}\sum_{i,p}X_{k,p}^{(i)}X_{l,p}^{(i)}-\sigma_{k,l}^{(i)}\right|\geq \epsilon \rp.\\
     & \leq \frac{p(p+1)}{2}e^{-\frac{nm\epsilon^2}{512\eta\sigma^4}}.
\end{align*}
This completes the proof.
\end{proof}
\section{Additional Experiments}\label{sec:additional-exp}
To validate our results, we first generate random matrices $\Sigma^{(i)}$ for each $i\in [m]$ and then generate $X^{(i)}_{j}$ for each $j\in [n]$.  
We now describe the generative procedure for $\Sigma^{(i)}$. We first produce a random orthogonal matrix $U$ where $U_{l,*}$ is supported on $[J]$ for each $l\in [k]$. We then produce a random diagonal matrix with each entry produced uniformly from $[0,1]$ and add $500$ to the first $k$ many diagonal elements to produce $\Lambda$.
$
\Sigma=U\Lambda(U)^T.
$
For each $i\in [m]$, we generate randomised uniform white noise matrices $\Acal^{(i)}$ $\Acal^{(i)}_{l,j}\overset{iid}{\sim} Uniform(0,1)$. $R^{(i)}$ is produced by adding scaled $\Acal^{(i)}$'s to $\Ibb_p$; i.e. 
$
R^{(i)} = \Acal^{(i)}/p^2+\Ibb_p.
$
We generate $D^{(i)}$'s, as white noise matrices where for each $l,j\in [p]; l>j, D^{(i)}_{l,j}$ is generated uniformly from $(0,1)$. This ensures \Cref{assume:unbiased} is satisfied. $\Sigma^{i}$ is then generated as,
$
\Sigma^{(i)}=R^{(i)}U(\Lambda+D^{(i)})U^T\lp R^{(i)}\rp^T.
$ We compute the empirical probability of successful support recovery $\prob(\hat{J}=J)$ as the number of times we obtain the exact support recovery among 100 repetitions, divided by 100. We compute the standard deviation as $({\prob(\hat{J}=J)(1-\prob( \hat{J}=J))})/{100}$. Performance is measured by the empirical probability of correct classification $\prob(\hat{J}=J)$ and maximal estimation error $\|\hat{\Pi}-\Pi\|_{\infty,\infty}$. To create error bars we use a empirical 95\% confidence interval around the estimate.

\paragraph{Uniform Distribution.} For all the experiments in this sub-section, we let $p=80$, $k=6$ and $|J|=6$ and perform 100 repetitions of each setting. We now consider the setting of different dimensions. We choose $n\in \lc 3,5,7,9\rc$ and use $\rho = \sqrt{\frac{\log (p+1)}{mn}}$ for all pairs of $(m,n)$. $T$ is defined as above. For each $j\in[n]$, $X^{(i)}_j\sim 0.5\Ncal(0,\Sigma^{(i)})+0.5\Delta_i$, where $\Delta_i$ is a $p\times 1$ vector of independent $Uniform(0,1)$ random variables. \Cref{fig:big-dim} depicts the outcome of our experiments. For different choices of $n$, the graphs overlap each other perfectly (both $\prob(\hat{J}=J)$ and $\ninf{\hat{\Pi}-\Pi}$). Then, we took the recovered support and derived the probability of support recovery for a novel task. 
\begin{figure}[!ht]
\begin{center}
\begin{tabular}{ccc}
  \includegraphics[width=0.29\textwidth]{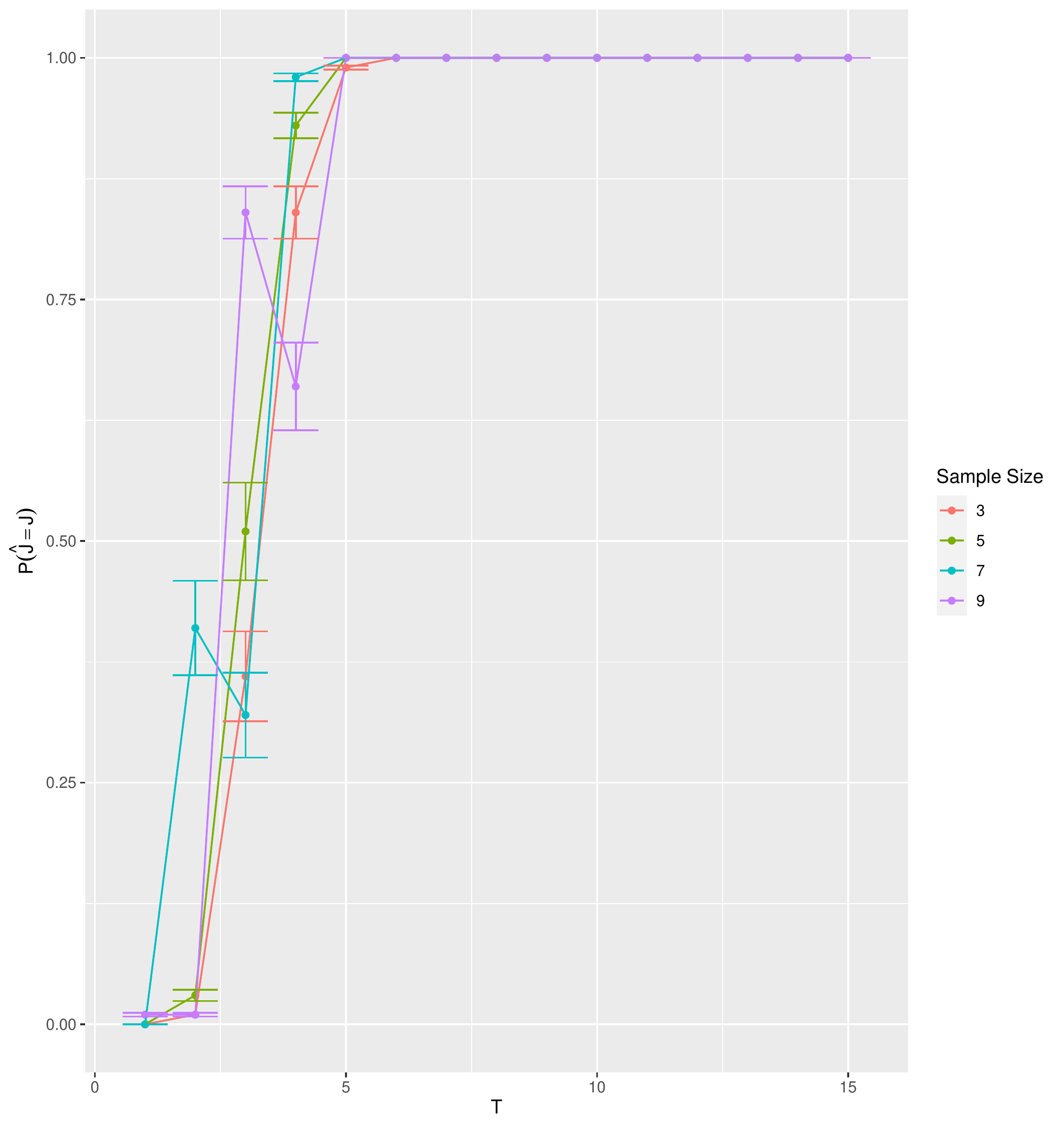} &   \includegraphics[width=0.29\textwidth]{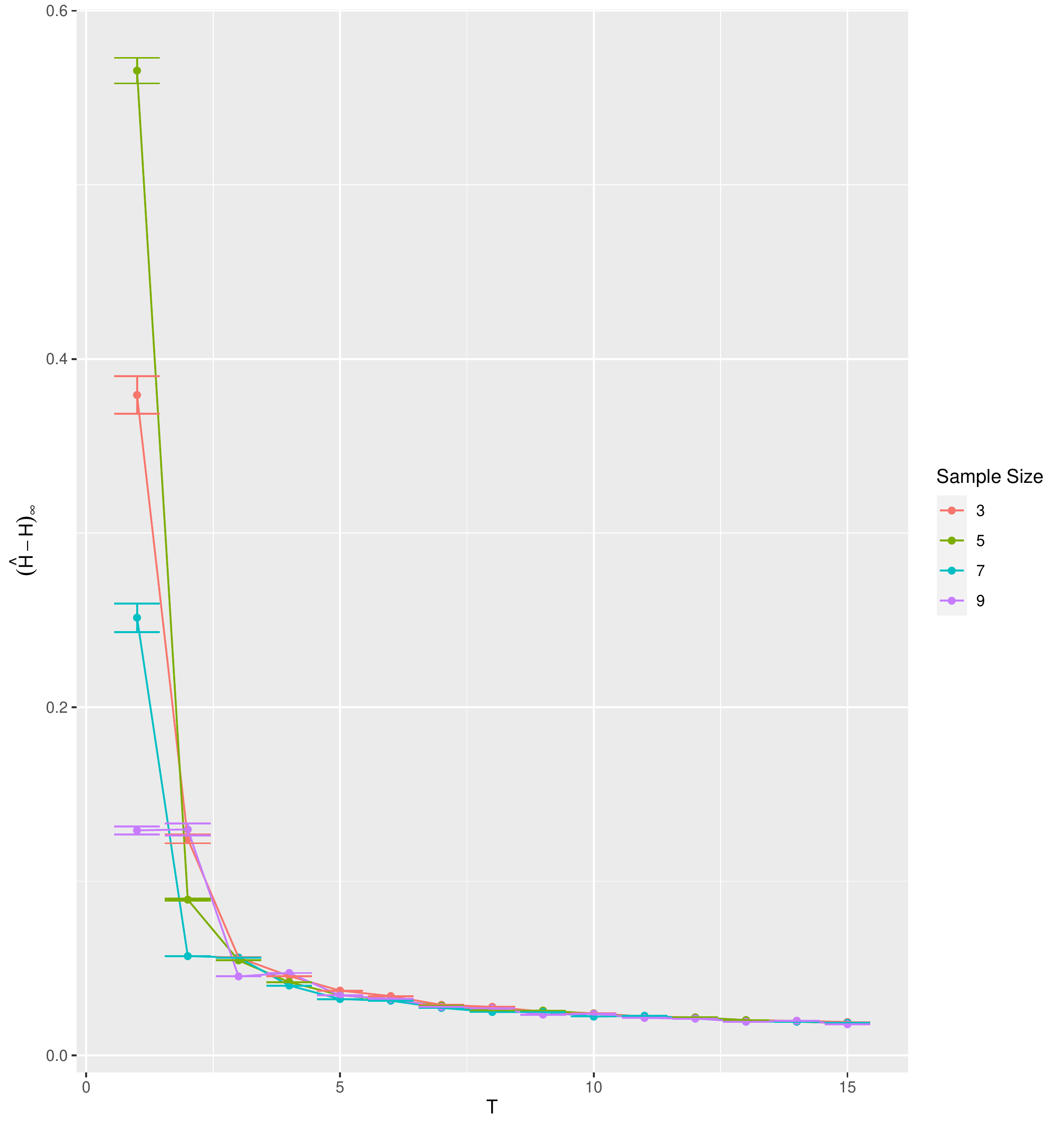}&
  \includegraphics[width=0.37\textwidth]{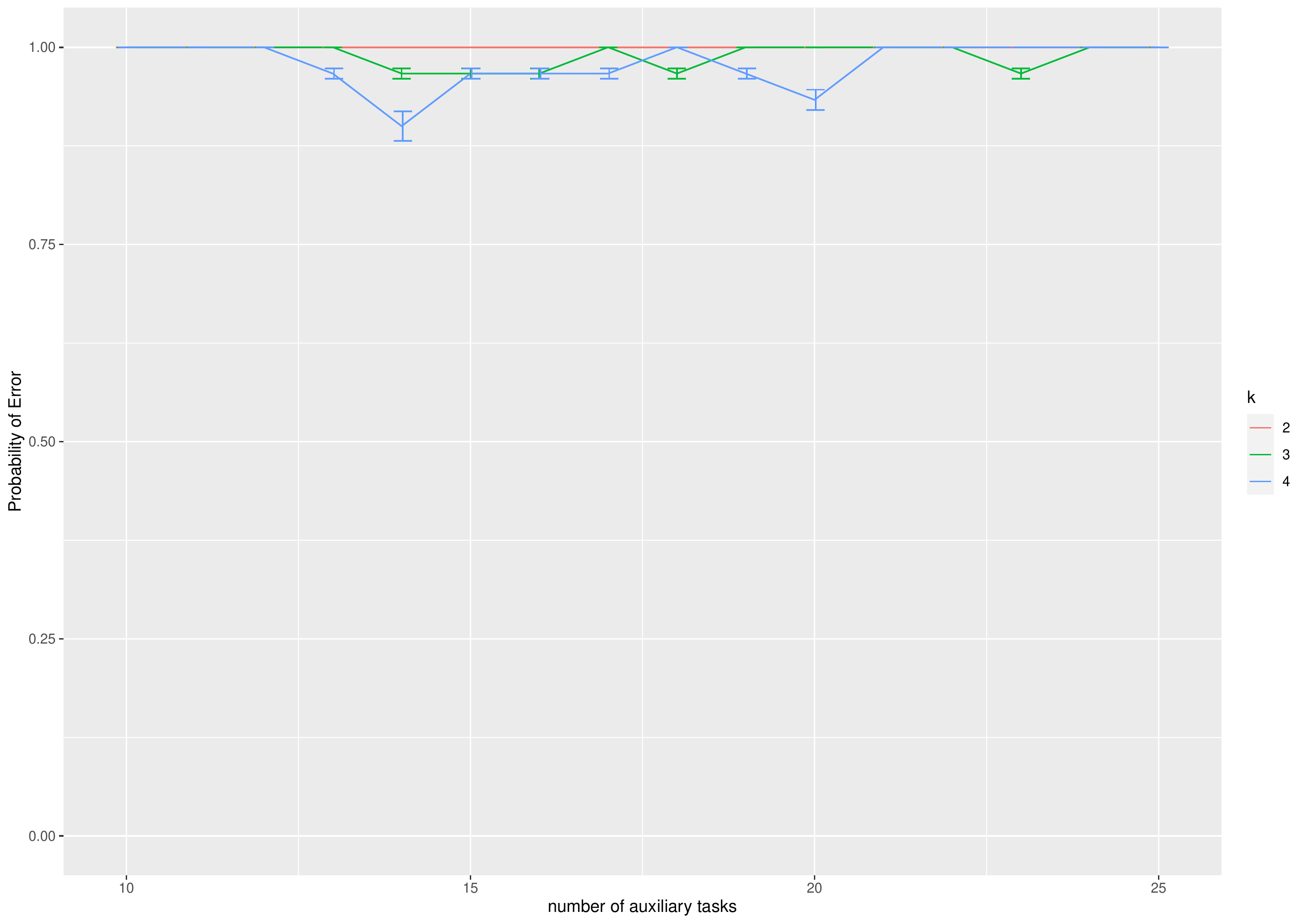}\\
(a) Support Recovery & (b) Maximal Error & (c) Novel task
\end{tabular}
\caption{Left, Center: Simulations for \Cref{thm:meta} on the probability of support union recovery and maximal error under various settings of $n$. Let $\rho = \sqrt{\frac{\log (p+1)}{mn}}$. The x-axis is set by $T:=\frac{mn}{\log(p+1)}$ varying from $\lc1,\dots,15\rc$ for support recovery. Right: Probability of support recovery for novel task. We let $\rho = \sqrt{\frac{\log (|J|+1)}{n}}$. The x-axis is set by $T:=\frac{n}{\log(|J|+1)}$ varying from $\lc10,\dots,25\rc$ }
~\label{fig:big-dim}
\end{center}

We perform the experiments when there are $2,3$, or $4$ extra zeros and in all of those experiments we have found greater than $95\%$ probability of accurately identifying the extra zeros.

\end{figure}

\paragraph{Sub-Gaussian Distribution.} For all the experiments in this sub-section, we let $n=5$, $k=6$ and $|J|=6$ and perform 100 repetitions of each setting. We now consider the setting of different dimensions. We choose $p\in \lc 40,45,50,55\rc$. $T$ and $\rho$ are defined as above. For each $j\in[n]$, $X^{(i)}_j\sim 0.5\Ncal(0,\Sigma^{(i)})+0.5\Delta_i$, where $\Delta_i$ is a $p\times 1$ vector of independent $Exponential(0,1)$ random variables. \Cref{fig:change-dim} depicts the outcome of our experiments. For different choices of $p$, the graphs overlap each other perfectly (both $\prob(\hat{J}=J)$ and $\ninf{\hat{\Pi}-\Pi}$). Then, we took the recovered support union and derived the probability of support recovery for a novel task. 
{We perform the experiments when there are $2,3$, or $4$ extra zeros and in all of those experiments we have found greater than  $90\%$ probability of accurately identifying the extra zeros.}
\begin{figure}[!ht]
    \centering
    \subfloat[Support Union Recovery]{{\includegraphics[width=0.29\textwidth]{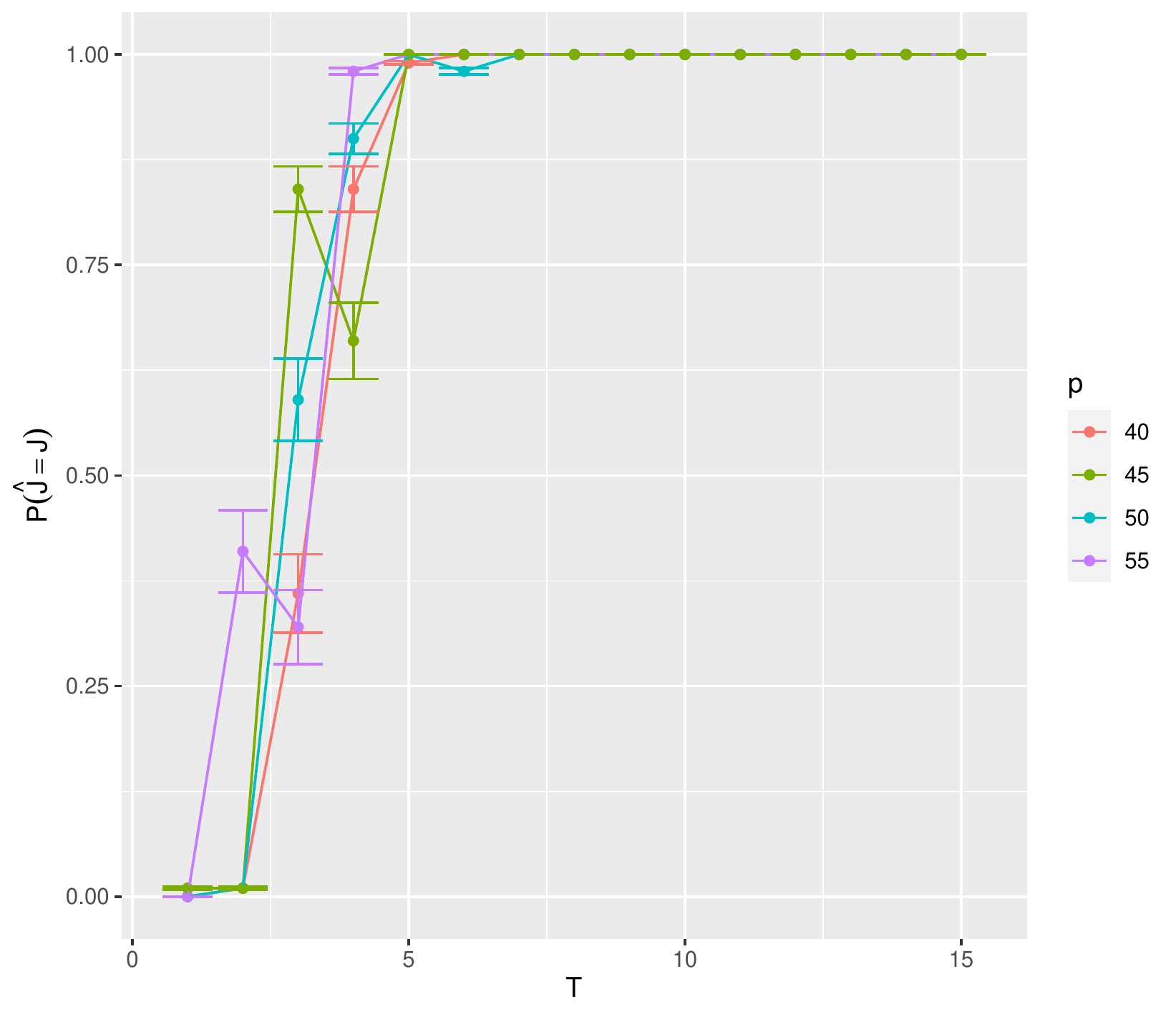}}}
    \subfloat[Maximal Error]{{\includegraphics[width=0.24\textwidth]{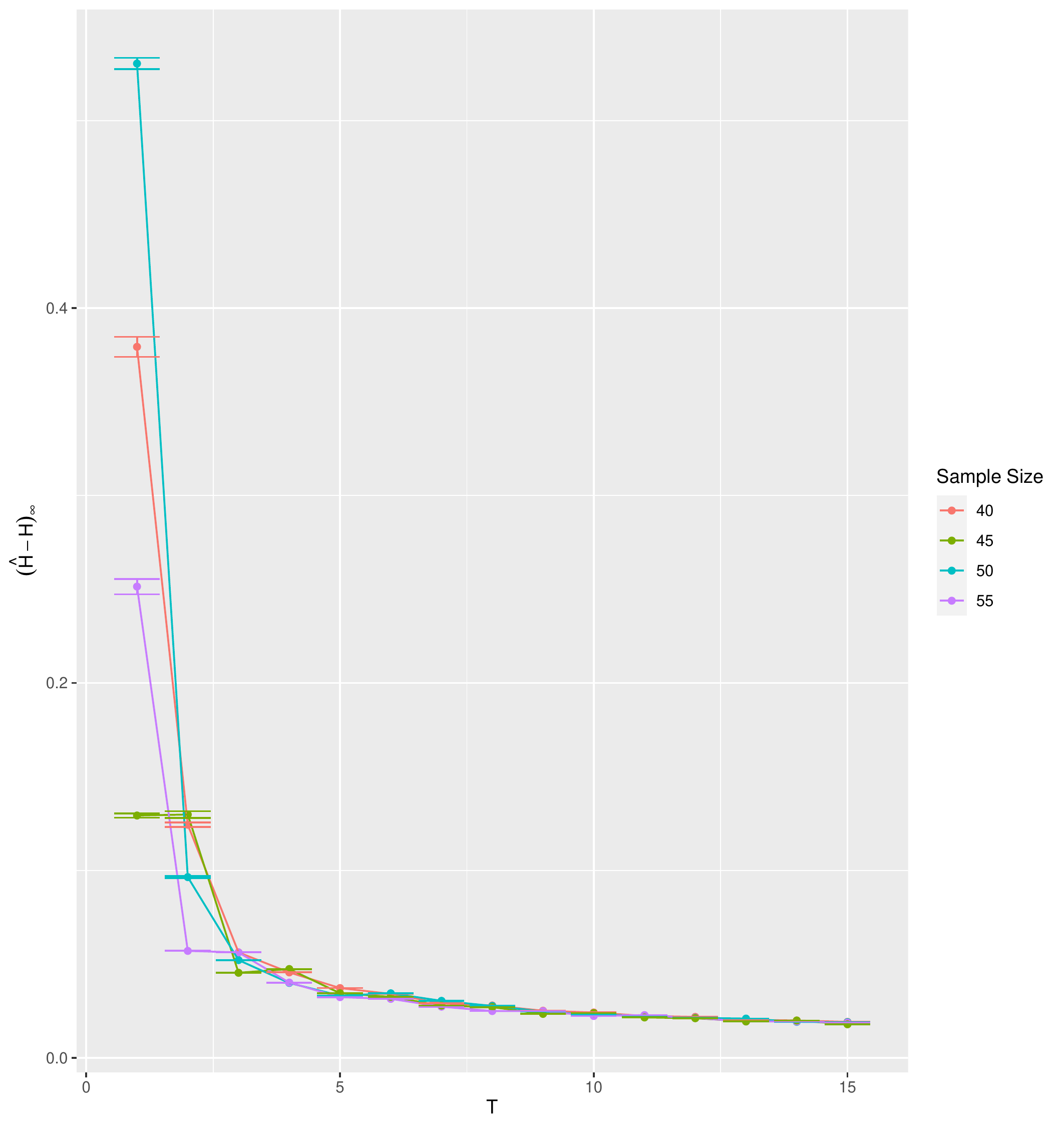}}}
    \subfloat[Novel Task]{\includegraphics[width=0.3\textwidth]{{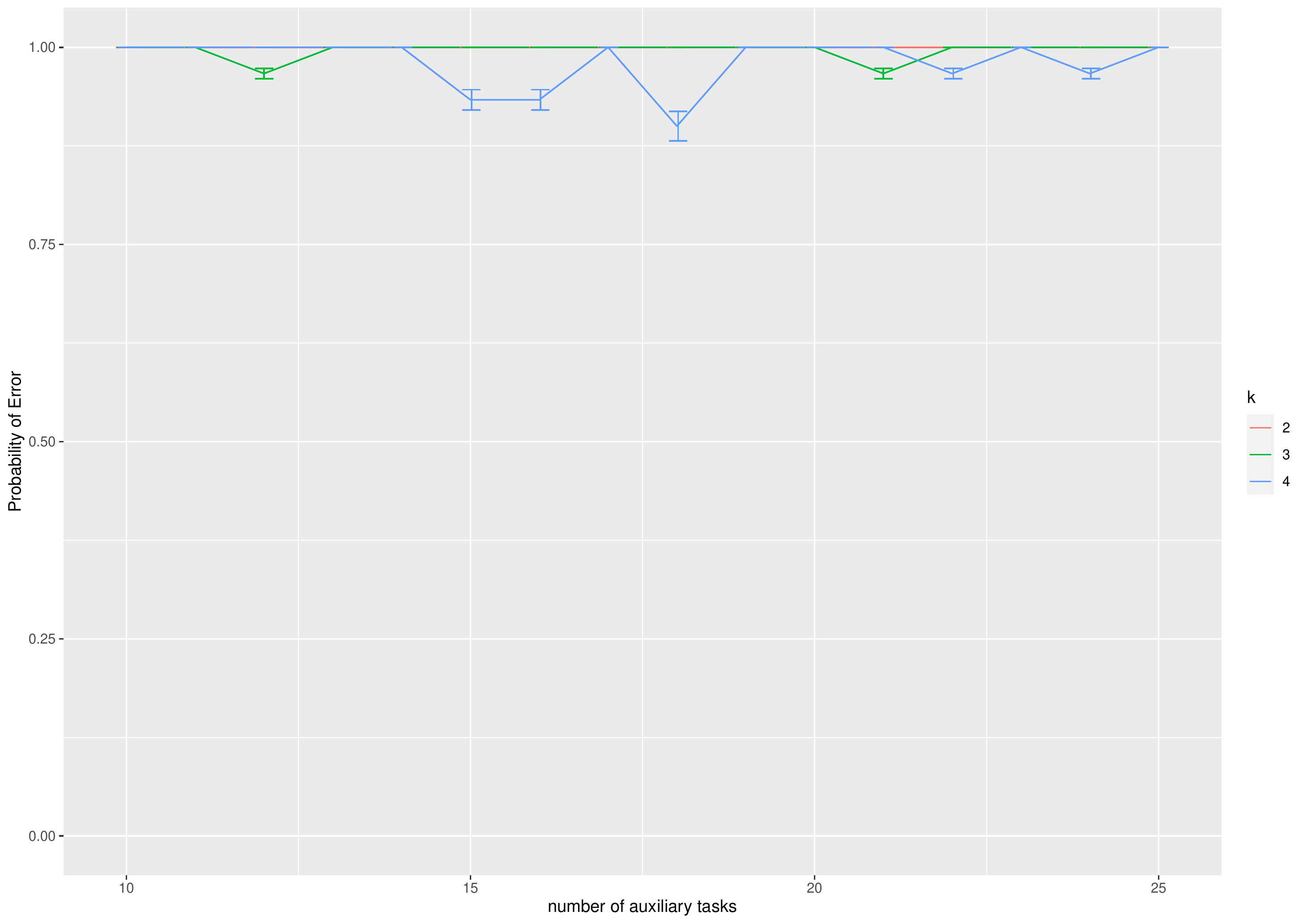}}}
    \caption{Left, Center: Simulations for \Cref{thm:meta} on the probability of support union recovery and maximal error under various settings of $p$. We let $\rho = \sqrt{\frac{\log (p+1)}{mn}}$. The x-axis is set by $T:=\frac{mn}{\log(p+1)}$ varying from $\lc1,\dots,15\rc$ for support union recovery. Right: Probability of support recovery for novel task. We take $\rho = \sqrt{\frac{\log (|J|+1)}{n}}$. The x-axis is set by $T:=\frac{n}{\log(|J|+1)}$ varying from $\lc10,\dots,25\rc$ }\label{fig:change-dim}
    \vspace{-.8cm}
\end{figure}
\section{Real World Datasets}\label{app:real-world}
\paragraph{Brain Imaging Data.}
The 1000 functional connectomes dataset is from \href{http://www.nitrc.org/projects/fcon_1000/}{nitrc.org} and contains resting-state functional magnetic resonance signals from 1128 subjects, 41 sites around the world, and 157 brain regions. We use a set of independent samples to construct the true covariance matrix, and thus the true support union. We then constructed the pooled estimator using $m$ auxiliary tasks, where $m\in\lc1,2,3,5,8,10,12,15,18,20,25,30,35,41\rc$. Then we compared the recovered support with the true support to calculate the mean percentage and standard deviation of error in the recovered support. The error bars were obtained by creating a 95\% confidence interval around the mean. As we clearly see the probability of incurring an error decreases as we increase the number of tasks.

\begin{figure}[!ht]
    \centering
    \includegraphics[scale = 0.3]{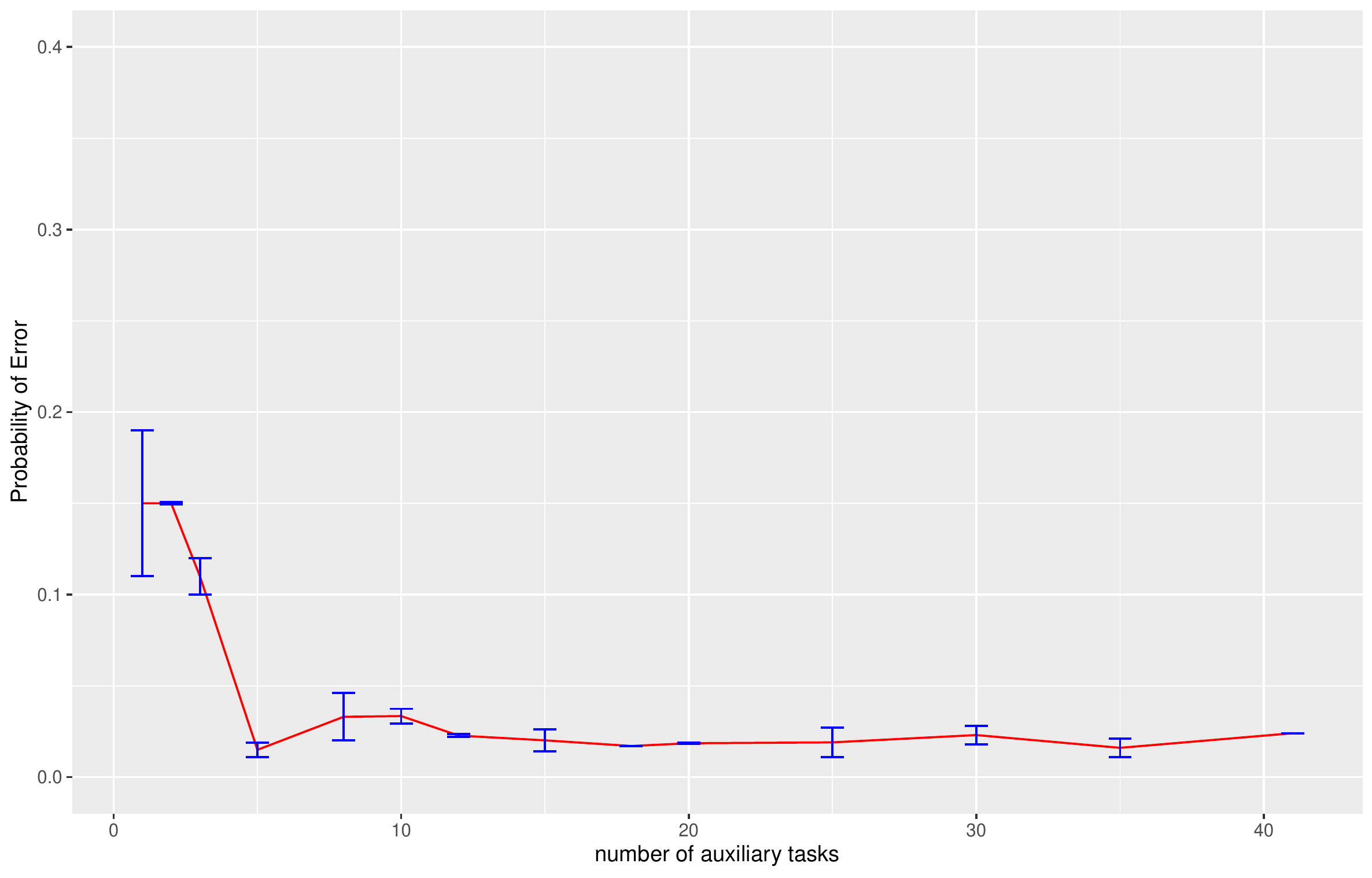}
    \caption{Probability of error of support recovery for brain imaging dataset. We took an increasing number of auxiliary tasks $m\in\lc1,2,3,5,8,\dots,30,35,41\rc$. The probability of error decreases.}
    \label{fig:my_label1}
\end{figure}

\paragraph{Cancer Genetics Data.} The cancer genome atlas program dataset is available at \href{https://portal.gdc.cancer.gov/}{gdc.cancer.gov} and contains the genetic sequencing of 1576 patients with 11 different categories of cancer. We use a set of independent samples to construct the true covariance matrix and the true support union. We then constructed the pooled estimator using $m$ auxiliary tasks where $m\in\{1,\dots 10\}$. Then we compared the recovered support with the true support to calculate the mean percentage and the standard deviation of the error in recovered support. The error bars were obtained by empirically calculating the $95\%$ confidence interval around the mean. As we can see the probability of incurring an error decreases as we increase the number of tasks.

\begin{figure}[!htbp]
    \centering
    \includegraphics[scale = 0.3]{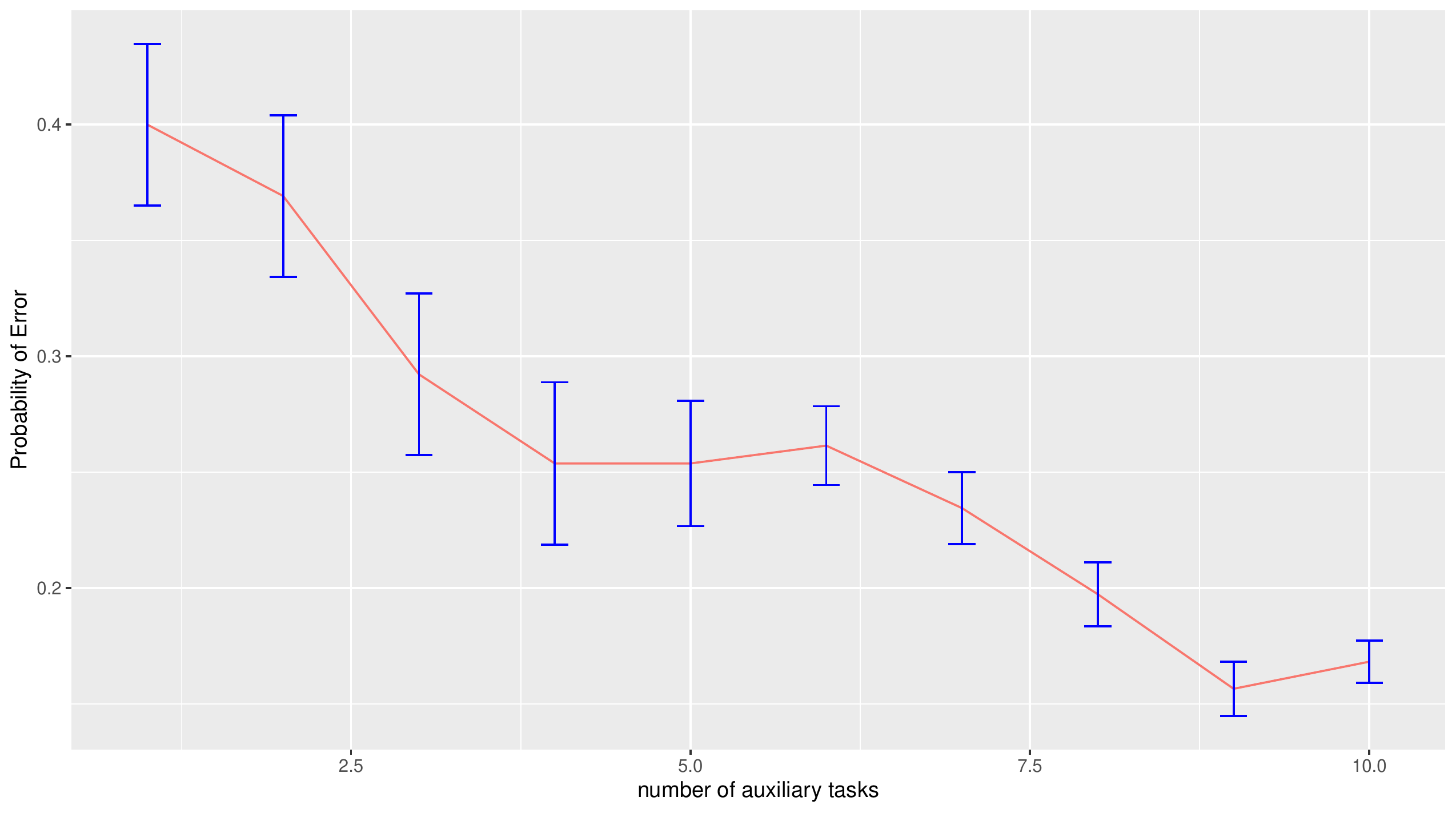}
    \caption{Probability of error of support recovery for cancer genetics dataset. We took an increasing number of auxiliary tasks $m\in\lc1,2,\dots,10\rc$. The probability of error can be seen to be decreasing.}
    \label{fig:my_label1}
\end{figure}

\section{Comparisons}

\subsection{Multiple Independent Sparse PCA. }In this section, we compare the meta-learning method against a multiple independent learning method when applied to support union recovery under a Gaussian distribution setting. 
The method of generating the dataset and meta-learning is as described in \Cref{sec:gauss-err}.
As before, we let $p=50$, $|J|=5$, and fix $n=3$. 
The values we choose for $m$ are chosen from the set  $\mathcal{M}\in\{2,3,4,6,7,8,10,11,12,14\}$. We perform 16 replications and report the mean probability and the 95\% confidence interval as the error bars around the mean. The results are plotted in \cref{fig:comparisons}.

\paragraph{Multiple Independent Learning. }For the multiple independent learning setup there are $m$ many objective functions being maximised simultaneously. Let $S^{(i)}$ be the covariance matrix associated with the task $i$. Then the $i$-th objective is to maximise,

\begin{align}
    \inner{S^{(i)}}{H}-\rho\|H\|_{1,1} \text{ subject to } H\in\Fcal^{k}.
\end{align}
 Let $\hat H^{(i)}$ be the maximiser of this objective function. Let $\hat J^{(i)}=supp(H^{(i)})$ and take $\hat J = \bigcup_{i\in \mathcal{M}} \hat{J}^{(i)}$ to be the natural estimator for the support union. We apply $\sqrt{\frac{\log (p+1)}{n}}$ as the penalty for each of the $m$ tasks.
\paragraph{Meta-Learning. }For the meta-learning setup, we proceed according to \Cref{sec:gauss-err} and recover the support union from the pooled covariance estimator. We apply $\sqrt{\frac{\log (p+1)}{mn}}$ as the penalty when $m$ is the total number of tasks.

Multiple independent learning of sparse PCA can be seen to be completely useless whenever the number of tasks get large and fails to recover the correct support union after a while. On the other hand, meta-learning improves steadily with the increasing number of tasks and the results coincide with our experiments in \Cref{sec:gauss-err}.
\begin{figure}[!h]
    \centering
    \includegraphics[width=0.5\textwidth]{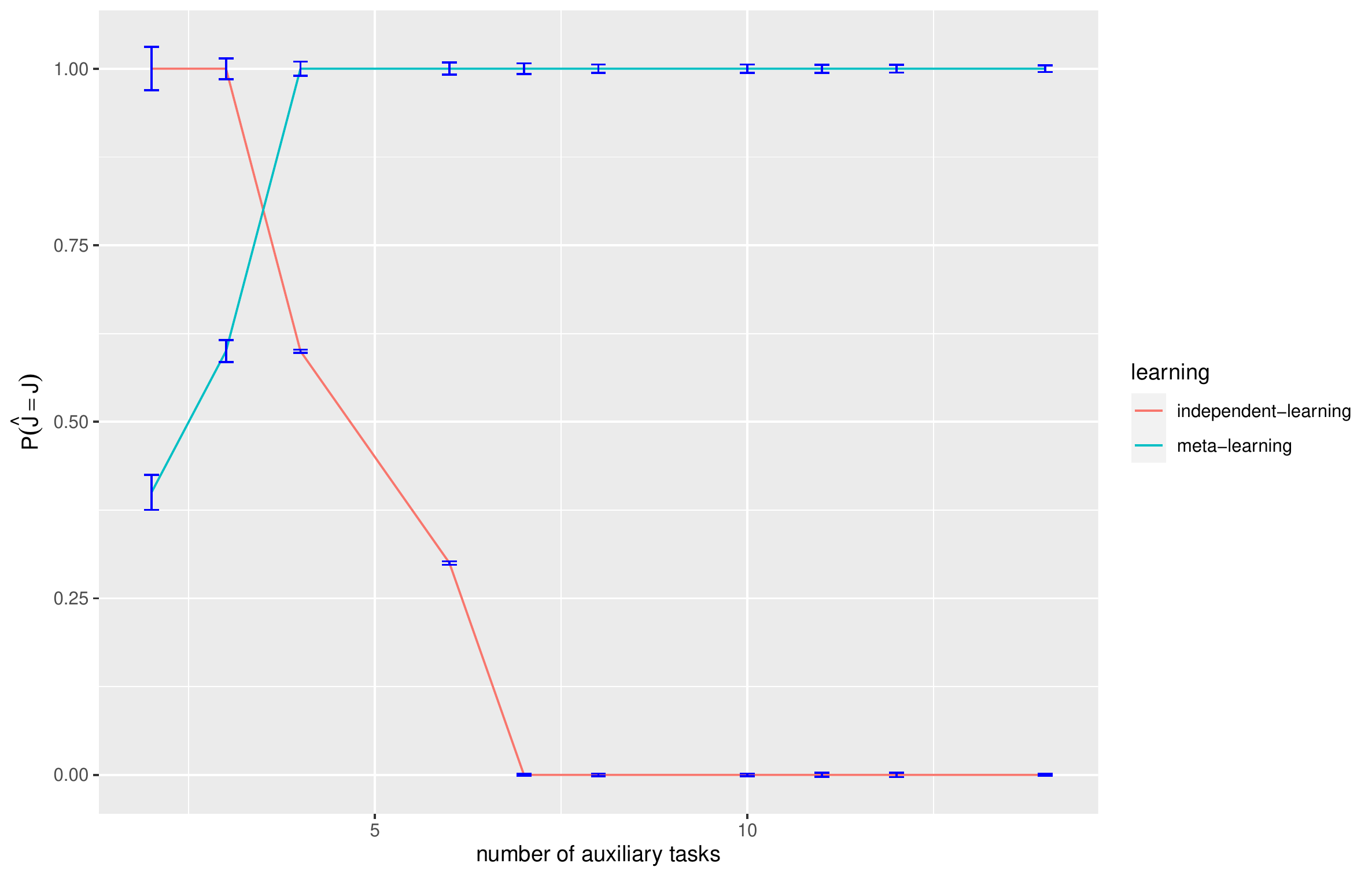}
    \caption{Comparison showing the difference in the probability of recovering the correct support union of meta-learning versus multiple independent sparse PCA. y-axis : Probability of support union recovery. x-axis : Number of auxiliary tasks. }
    \label{fig:comparisons}
\end{figure}

\subsection{Comparison With Multi-Task PCA. }In this section, we compare the meta-learning method against a multi-task learning method when applied to support union recovery under a Gaussian distribution setting. 
The method of generating the dataset and meta-learning is as described in \Cref{sec:gauss-err}.
As before, we let $p=50$, $|J|=5$, and fix $n=3$. 
The values we choose for $m$ are chosen from the set  $\mathcal{M}\in\{2,3,4,6,7,8,10,11,12,14\}$. We perform 16 replications and report the mean probability and the 95\% confidence interval as the error bars around the mean. The results are plotted in \cref{fig:comparisons}.
\begin{figure}[!ht]
    \centering
    \includegraphics[width=0.5\textwidth]{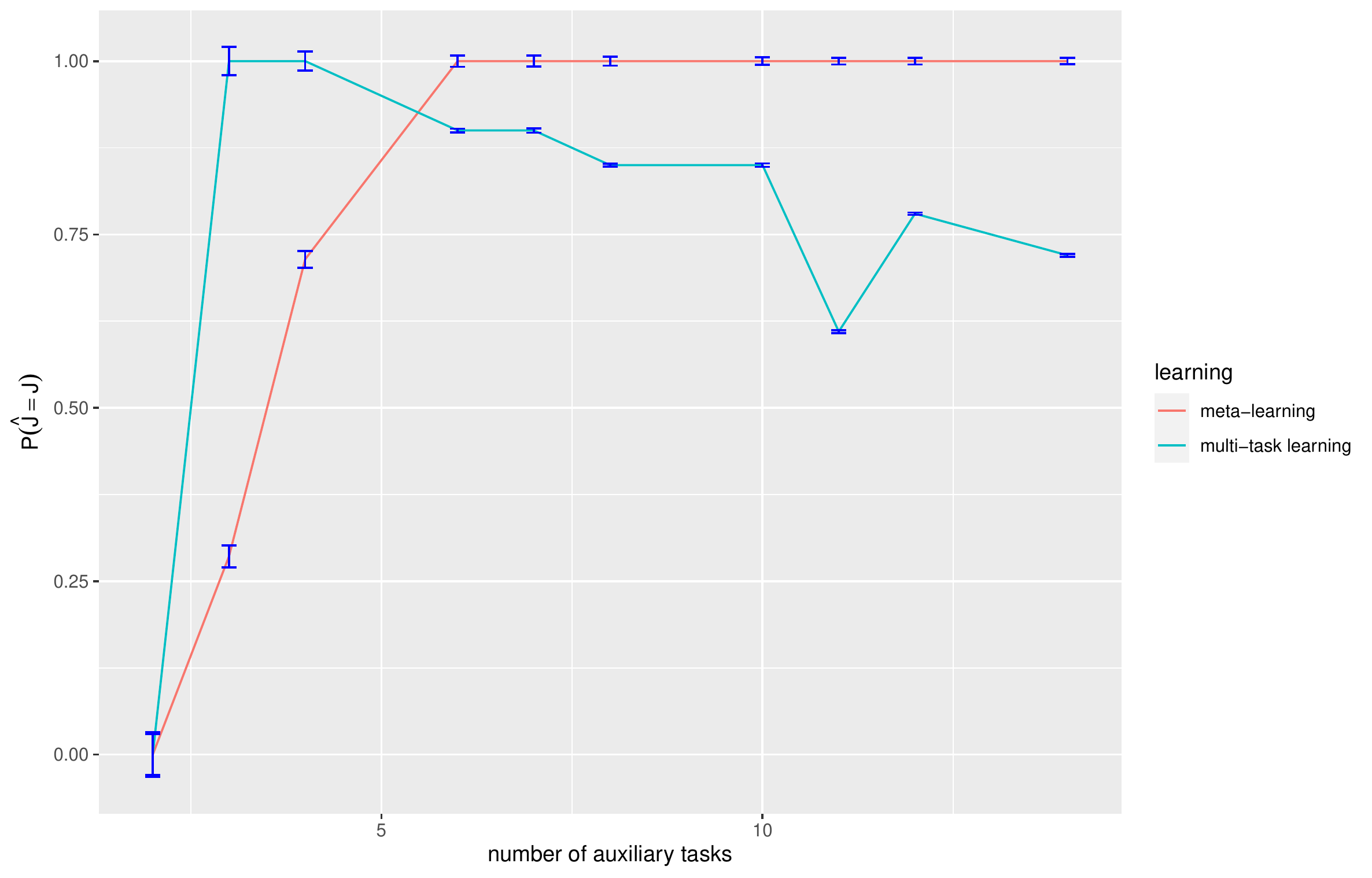}
    \caption{Comparison showing the difference in the probability of recovering the correct support union of meta-learning versus multi-task learning. y-axis : Probability of support union recovery. x-axis : Number of auxiliary tasks. }
    \label{fig:comparisons}
\end{figure}
\paragraph{Multi-Task Learning. }For the multi-task learning method, we maximise the following objective function:
\begin{align*}
    \sum_{i=1}^m \inner{S^{(i)}}{H^{(i)}}-\rho\|H^{(1)},\dots,H^{(m)}\|_1
    \text{ subject to $H^{(i)}\in\Fcal^k\ \forall \ i\in{1,\dots,m}$ },
\end{align*}
where $S^{(i)}$ is the covariance matrix corresponding to task $i$ and $\rho\|H^{(1)},\dots,H^{(m)}\|_1=\sum_{j,k} \sup_{i\in\{1,\dots,m\}} |H_{j,k}^{(i)}|$. We apply $\sqrt{\frac{\log (p+1)}{mn}}$ as the penalty for the tasks. We recover the support union by taking the union of the recovered support.

\paragraph{Meta-Learning. }For the meta-learning setup, proceed according to \Cref{sec:gauss-err} and recover the support union from the pooled covariance estimator. We apply $\sqrt{\frac{\log (p+1)}{mn}}$ as the penalty when $m$ is the total number of tasks.

As we can see, meta-learning outperforms multi-task learning with 100\% accuracy for large values of $m$.

\end{document}